\documentclass[final,5p,times,twocolumn,preprint]{elsarticle}

\usepackage{epsfig} 
\usepackage{graphicx}
\usepackage{caption}
\usepackage{float}
\usepackage{epsfig}
\usepackage{url}
\usepackage{verbatim}
\usepackage{amssymb}
\usepackage{fancyhdr}
\usepackage{subfigure}
\usepackage{booktabs}
\usepackage{xcolor}
\usepackage{textcomp}
\usepackage{algorithm}
\usepackage{multirow}
\usepackage{algorithm}
\usepackage{algorithmic}
\usepackage{array}
\usepackage{subfig}
\usepackage{stmaryrd}
\usepackage{amsfonts,latexsym,amssymb}
\usepackage{amsmath}
\usepackage{amsthm}
\usepackage{multirow}

\input{tensor.math}

\newdefinition{definition}{Definition}
\newtheorem{theorem}{Theorem}
\newtheorem{remark}{Remark}
\newtheorem{lemma}[theorem]{Lemma}
\newproof{pf}{Proof}

\journal{Elsevier}

\begin{document}

\begin{frontmatter}

\title{Towards Efficient and Accurate Approximation: Tensor Decomposition Based on Randomized Block Krylov Iteration}

\author[a]{Yichun Qiu}
\ead{ycqiu@gdut.edu.cn}
\author[a,b]{Weijun Sun \corref{*}}
\ead{gdutswj@gdut.edu.cn}
\author[a,c]{Guoxu Zhou}
\ead{gx.zhou@gdut.edu.cn}
\fntext[*]{* Corresponding author.}
\author[a,d,e]{Qibin Zhao}
\ead{qibin.zhao@riken.jp}

\address[a]{School of Automation, Guangdong University of
	Technology, Guangzhou 510006, China}
\address[b]{Guangdong-Hong Kong-Macao Joint Laboratory for Smart Discrete Manufacturing,  Guangzhou 510006, China}
\address[c]{Key Laboratory of Intelligent Detection and The Internet of Things in Manufacturing, Ministry of Education, Guangzhou 510006, China}
\address[d]{Joint International Research Laboratory of Intelligent Information Processing and System Integration of IoT, Ministry of Education, Guangzhou 510006, China}
\address[e]{Center for Advanced Intelligence 	Project (AIP), RIKEN, Tokyo, 103-0027, Japan}

  \begin{abstract}
    Efficient and accurate low-rank approximation (LRA) methods are of great significance for large-scale data analysis. Randomized tensor decompositions have emerged as powerful tools to meet this need, but most existing methods perform poorly in the presence of noise interference. 
    Inspired by the remarkable performance of randomized block Krylov iteration (rBKI) in reducing the effect of tail singular values, this work designs an rBKI-based Tucker decomposition (rBKI-TK) for accurate approximation, together with a hierarchical tensor ring decomposition based on rBKI-TK for efficient compression of large-scale data. Besides, the error bound between the deterministic LRA and the randomized LRA is studied. Numerical experiences demonstrate the efficiency, accuracy and scalability of the proposed methods in both data compression and denoising. 
  \end{abstract}

\begin{keyword}
  Tensor decomposition; Low-rank approximation; Randomized block Krylov iteration; Tucker decomposition; Tensor ring decomposition.
\end{keyword}

\end{frontmatter}

\section{Introduction}
\label{intro}
Large-scale multi-way data, such as color images, video sequences and EEG signals, are ubiquitous in scientific computing and numerical applications \cite{Kolda2009,Tensors2015}. 
Tensor decomposition models provide a natural and efficient way to represent, store and compute such data \cite{Tucker1966, TTD2011, Zhou2012CPD, TRD2016, FCTN2021}. 
Since observation data are often noisy, incomplete and redundant, low-rank tensor decomposition plays an increasingly important role in many machine learning tasks, such as noise reduction \cite{Chen2022,Du2022}, data completion \cite{YuTR2021,Shao2022,Wu2022} and data mining \cite{He2019,Wangcls2021,3wayclus2021}. 
However, directly using traditional methods (e.g., singular value decomposition (SVD) \cite{HOSVD2000}, alternating least squares (ALS) \cite{HOOI2000}) to compute low-rank approximation (LRA) is quite space and time consuming, especially for large-scale problems. 

Accordingly, randomized sketching techniques offer the possibility of eliminating this limitation.
The basic idea of the randomized sketching is to capture a significant subset or subspace of the data to produce an approximation with acceptable accuracy
but less computational cost. 
Specifically, randomized sampling and randomized projection are frequently employed as sketching techniques to efficiently accelerate LRA \cite{randskt2015,randskt2017}.
In many large-scale problems, randomized methods outperform the classical competitors in terms of efficiency \cite{randAlgs2011,iterLRA2017}.
Without loss of generality, in the paper, randomized sketching techniques are divided into two main categories: non-iterative sketching and iterative sketching. 
Non-iterative sketching tends to sketch the input data in one-pass, whereas iterative sketching tends to sketch through power iterations.

In parallel with the development of randomized sketching techniques, a number of randomized tensor decomposition methods have emerged as powerful tools for large-scale data analysis. 
In non-iterative sketching techniques, for example, there are Tucker decomposition (TKD) \cite{che2019,Feng2020,2iQR2020}, tensor-train decomposition (TTD) \cite{che2019}, tensor ring decomposition (TRD) \cite{Yuan2019TR} and hierarchical Tucker + Canonical Polyadic decomposition (CPD) \cite{2iQR2020} designed based on Gaussian random projection, TKD \cite{csSVD2018} and CPD \cite{Wang2015} designed based on CountSketch sampling, as well as CPD \cite{lsSVD2022} and TRD \cite{Malik2022} designed based on leverage score sampling.
In iterative sketching techniques, for example, there are TKD with two ALS-like iterations \cite{2iQR2020}, TKD with random range finder (RRF)-based initialization and two ALS-like iterations \cite{3iSVD2021}, CPD \cite{randCP2020} and TKD \cite{piSVD2021} with power iterations designed based on Gaussian random projection, TKD \cite{Minster2020} and TRD \cite{rrQR2021} designed based on rank revealing projection, and TKD designed based on Krylov subspace projection \cite{K-Vec-Ten2013,BKS2022,Eldn2022}. 

However, real-world data are often corrupted by varying degrees of noise, which leads to a heavy tail for singular values of the data.
In this case, non-iterative sketching techniques can neither eliminate the effects of tail noise nor achieve a good approximation.
On the contrary, iterative sketching techniques have the potential to address this issue by using power iterations.
Related works have indicated the strength of iterative sketching techniques over non-iterative sketching techniques in reducing the impact of tail singular values.
Therefore, this work aims to search accurate approximation for noisy tensors by employing more efficient iterative sketching method.

Recently, randomized block Krylov iteration (rBKI) \cite{rBKI2015} has raised concerns in the LRA of matrices \cite{rBKI2021,Wang2021,rBKI2022}. 
Compared to traditional Krylov-type methods, rBKI has two advantages: on the one hand, convergence performance is improved by using random sketch as initialization, and on the other hand only fewer iterations are needed to obtain the same guaranteed accuracy.
Moreover, existing Krylov-type TKD methods either lead to inefficiencies and slow convergence or are restricted to specific types of tensor (e.g., symmetric, sparse and 3D-tensors).
Therefore, in this work, tensor decomposition methods based on rBKI are proposed for more general observation tensors (with attributes of dense, noisy or more than three dimensions), which is not a simple extension but a further exploration with the following contributions.

$\bullet$ Given the superiority in constructing accurate subspace and reducing the effect of tail singular values, an rBKI-based TKD (rBKI-TK) is designed for LRA of noisy tensors, which not only offers the possibility to explore internal structural information from multiple perspectives, but also yields near-optimal approximation with the best known theoretical efficiency.
Besides, further investigation is provided and it is verified that instead of using full input data to construct the Krylov subspace, using partial data (even 10\%) can yield acceptable results with reduced complexity.

$\bullet$ To address large-scale tensor decomposition, a hierarchical TRD method is developed by leveraging rBKI-TK compression, which can effectively reduce the cost of storage and computation, as well as the guaranteed accuracy. The proposed method is of great importance to enable deep learning to be applied to large-scale tasks. 

$\bullet$  Considering that it is necessary to analyze the approximation error between randomized and deterministic methods.
Theoretical analysis is provided in the paper to discuss how the sketching affects the accuracy of LRA, which has not been done in the existing works (to our knowledge).

$\bullet$ A relatively comprehensive comparison of existing randomized tensor decompositions, including non-iterative and iterative series, is carried out in the simulation study. 
It is verified that the proposed rBKI-TK can achieve excellent denoising performance in both synthetic and real-world data. Moreover, in terms of time consumption, the rBKI-TK-TR also shows its great strength over traditional TR-ALS.

The following sections are organized as follows. 
In Section \ref{sec:Preliminary}, some frequently used notations and definitions are briefly overviewed, and a theoretical analysis of the randomized LRA problem is presented. In Section \ref{sec:rBKI-TK}, the main algorithms are developed based on rBKI, including a rBKI-based TKD (rBKI-TK) for accurate approximation and a hierarchical TRD based on rBKI-TK for efficient compression of large-scale tensor. In Section \ref{sec:Sim-Dis}, simulations on synthetic and real-world data are conducted to verified the superiority of the proposed methods, and analysis are provided to present new insights. Finally, conclusions and future directions are depicted in Section \ref{sec:conclusion}.

\section{Preliminaries}
\label{sec:Preliminary}
\subsection{Notations and Definitions}
Throughout the paper, the terms ``order'' or ``mode'' are used to interchangeably describe a tensor data.
For example, a third-order tensor can be unfolded into a mode-$n$ matricization in each mode with $n=1,2,3$.
Besides, a tensor set by [N=3, I=20, R=3] represents a third-order tensor with multi-dimension =[20,20,20] and multilinear-rank =[3,3,3].
For better description, some frequently used notations are listed in Table \ref{tab:notations}.
Besides, two basic tensor decomposition models studied in this work that are described below.

\begin{table}[htbp] \scriptsize
  \renewcommand\arraystretch{1.2}
  \caption{Notations \& Description}
  \label{tab:notations}
  \centerline{
  \begin{tabular}{ >{\hfill}p{.15\textwidth} | p{.22\textwidth}  }
  \hline \hline 
  $\Real$  & Set of real numbers.  \\
  $\mathbf{x}, \mat{X}, \tensor{X}$ & Vector, matrix, and tensor.\\
  $\mat{X}^{\top}$ &  The transpose of matrix \mat{X}.\\
  \tenmat{X} &  The mode-$n$ matricization of tensor \tensor{X}. \\
  $\tensor{X}(i,j,k)$ &  The ($i,j,k$)th-entry of tensor \tensor{X}. \\
  $\kkp$ & The Kronecker product. \\
  $\times_n$ & The mode-$n$ product.\\
  $\|\mathbf{X}\|_F = \sqrt{\mathbf{X}^{\top}\mathbf{X}}$ & Frobenius norm of matrix \mat{X}.\\
  \hline \hline
  \end{tabular}}
  \end{table}

  \begin{definition}
    Tucker decomposition (TKD) of an $N$th-order tensor $\tensor{X}\in \Real^{I_1 \times I_2 \cdots \times I_N}$ is expressed as an $N$th-order core tensor interacting with $N$ factor matrices. Suppose the multilinear rank of $\tensor{X}$ is $(R_1,R_2,\ldots,R_N)$, then it has the following mathematical expression, 
    \begin{equation}
    \label{eq:TKD}
    \begin{split}
    \tensor{X}&= \tensor{G}\ttmn[1]{U}\ttmn[2]{U}\cdots\ttmn[N]{U}\\
    &\defeq \compactTucker{G}{U}, 
    \end{split}
    \end{equation}
    where $\matn{U}\in\Real^{I_n\times R_n}$ is the mode-$n$ factor matrix, $\tensor{G}\in\Real^{R_1\times R_2\cdots\times R_N}$ is the core tensor reflecting the connections between the factor matrices.
  \end{definition}
  
Typically, when computing a TKD, the mode-n matricization on \tensor{X} as $\tenmat{X} \in\Real^{I_n \times \prod_{p\neq n}I_p}$, $n \in \Natural_N$ is necessary to be performed first and then written in a form of matrix product.  
  \begin{equation}
  \label{eq:unTKD}
    \min \frob{\tenmat{X}-\matn{U}\tenmat{G}\left(\bigkkp\nolimits_{p\neq n}\matn[p]{U}\right)^{\top}}^2, \quad n = 1,2, \cdots,N,
  \end{equation}
where $\matn{U} \in\Real^{I_n\times R_n}$, $R_n \le min(I_n, \prod_{p\neq n}I_p)$, and $\tenmat{G} \left(\bigkkp\nolimits_{p\neq n}\matn[p]{U}\right)^{\top} \in\Real^{R_n \times \prod_{p\neq n}I_p}$ can be computed by SVD \cite{HOSVD2000} or ALS \cite{HOOI2000} algorithms. 

\begin{definition}
  Tensor ring decomposition (TRD) of an $N$th-order tensor $\tensor{X}\in\Real^{I_1\times I_2\cdots\times I_N}$ is expressed as a cyclic multilinear product of $N$ third-order core tensors that can be shifted circularly. Given a TR-rank of ($R_1\times R_2\cdots\times R_N$), the TRD operator in denoted by $\tensor{X} \defeq \mathfrak{R} \tenfactors{ \tensor{X}_1,\tensor{X}_2, \cdots, \tensor{X}_N }$, and the element-wise TRD is expressed below.
  \begin{equation}
    \label{eq:TR}
    \begin{split}
    & \quad \tensor{X}(i_1, i_2, \cdots, i_{_N}) \\
    & = \sum_{r_1 = 1}^{R_1} \sum_{r_2 = 1}^{R_2} \cdots \sum_{r_{_N} = 1}^{R_{_N}}  \tensor{X}_1^{(i_1)}(r_{_N},r_1) \tensor{X}_2^{(i_2)}(r_1,r_2) \cdots \tensor{X}_N^{(i_{_N})}(r_{_{N-1}},r_{_N}) ,
    \end{split}
  \end{equation}
  \end{definition}
  where $\tensor{X}_n \in \Real^{R_{n-1} \times I_n \times R_n} (n = 1,2, \cdots,N)$ are the TR factors (or TR cores) of $\tensor{X}$, and $\tensor{X}_n^{(i_n)} \in \Real^{R_{n-1} \times R_n}$ denotes the $i_n$-th slice of the $n$-th factor. 

Note that, each factor $\tensor{X}_n$ contains one dimension-mode ($I_n$ in mode-2) and two rank-modes ($R_{n-1}$ in mode-1 and $R_n$ in mode-3). 
Besides, adjoint factors $\tensor{X}_k$ and $\tensor{X}_{k+1}$ have a shared rank to ensure the contraction operation between them, e.g., $R_N=R_1$. 
Similarly, when computing TRD, the expression (Eq. \ref{eq:TR}) can be rewritten with a mode-$n$ matricization form as,
\begin{equation}
  \label{eq:unTR}
      \min \frob{\tenmat{X}-\mathbf{X}_{(n,2)} (\mathbf{X}_{(p \neq n,2)})^{\top}}^2, \quad n = 1,2, \cdots,N,
\end{equation}
where $\mathbf{X}_{(n,2)} \in \Real^{I_n  \times R_{n-1} R_n}$ and $\mathbf{X}_{(p \neq n,2)} \in \Real^{ \prod_{p\neq n} I_p \times R_{n-1} R_n  }$  denotes the mode-2 matricization of the $n$-th factor and the rest $N-1$ factors, respectively.

\subsection{Low-rank Approximation}
Low-rank approximation (LRA) is widely used in many machine learning tasks, such as data completion, noise reduction and network compression.
In this subsection, the LRA problem with respect to Frobenius norm, and the error bound for $(1+\epsilon)$ quasi-optimal approximation is described as follows.
\begin{definition}
  \label{df:LRA}
  For any matrix $ \mathbf{X} \in \Real^{n \times d}$ with specific rank($ \mathbf{X}$) = $r \le \min(n,d)$, the LRA problem is to compute a rank-$k$ ($k \le r$) approximation $ \tilde{\mathbf{X}}_k$ whose Frobenius norm error is comparable to the optimal rank-$k$ approximation $\mathbf{X}_k$, and the ideal error $\epsilon$ is very small,
  \begin{equation}
    \label{eq:F_errb}
    \|\mathbf{X}-\tilde{\mathbf{X}}_k\|_F^2 \le (1+\epsilon) \|\mathbf{X}-\mathbf{X}_k\|_F^2.
  \end{equation}
\end{definition}

Generally, the optimal rank-$k$ approximation $\mathbf{X}_k$ can be obtained from the truncated SVD of $\mathbf{X}$, which is known as the Eckart-Young Theorem: 
\begin{lemma}
  \label{lm:E-Young}
  Assume that $[\mathbf{U}, \mathbf{\Sigma}, \mathbf{V} ]= \text{svd}(\mathbf{X})$, and $\mathbf{X}_k = \mathbf{U}_k \mathbf{\Sigma}_k \mathbf{V}_k^{\top} = \mathbf{U}_k \mathbf{U}_k^{\top} \mathbf{X}$, where $\mathbf{U}_k$ denotes the left singular vectors corresponding to the top-$k$ singular values in $\mathbf{\Sigma}$.
  Then, for any matrix $\tilde{\mathbf{X}}_k$ with rank-$k$, the minimal error is achieved with $\mathbf{X}_k$. That is, $\min_{\tilde{\mathbf{X}}_k} \| \mathbf{X} - \tilde{\mathbf{X}}_k \|_F= \| \mathbf{X} - \mathbf{X}_k \|_F$.
\end{lemma}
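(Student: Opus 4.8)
The statement is the matrix Eckart--Young theorem, so the plan is to prove it in three moves: reduce an arbitrary rank-$k$ approximant to an orthogonal projection of $\mathbf{X}$, rewrite the Frobenius error as a trace functional of that projection, and then maximize the functional over $k$-dimensional subspaces.

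First I would argue that it suffices to consider projections. Given any $\tilde{\mathbf{X}}_k$ with $\rank{\tilde{\mathbf{X}}_k}\le k$, choose $\mathbf{P}\in\Real^{n\times k}$ with orthonormal columns whose span contains the column space of $\tilde{\mathbf{X}}_k$. Since the columns of $\mathbf{X}-\mathbf{P}\mathbf{P}^{\top}\mathbf{X}$ lie in $\mathrm{span}(\mathbf{P})^{\perp}$ while those of $\mathbf{P}\mathbf{P}^{\top}\mathbf{X}-\tilde{\mathbf{X}}_k$ lie in $\mathrm{span}(\mathbf{P})$, the Pythagorean identity gives
\[
\|\mathbf{X}-\tilde{\mathbf{X}}_k\|_F^2=\|(\mathbf{I}-\mathbf{P}\mathbf{P}^{\top})\mathbf{X}\|_F^2+\|\mathbf{P}\mathbf{P}^{\top}\mathbf{X}-\tilde{\mathbf{X}}_k\|_F^2\ \ge\ \|(\mathbf{I}-\mathbf{P}\mathbf{P}^{\top})\mathbf{X}\|_F^2,
\]
with equality exactly when $\tilde{\mathbf{X}}_k=\mathbf{P}\mathbf{P}^{\top}\mathbf{X}$. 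Hence the problem reduces to minimizing $\|(\mathbf{I}-\mathbf{P}\mathbf{P}^{\top})\mathbf{X}\|_F^2$ over all orthonormal $\mathbf{P}\in\Real^{n\times k}$.

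Next I would expand this error, using idempotence of the projection and $\mathbf{P}^{\top}\mathbf{P}=\mathbf{I}$, to obtain $\|(\mathbf{I}-\mathbf{P}\mathbf{P}^{\top})\mathbf{X}\|_F^2=\|\mathbf{X}\|_F^2-\trace{\mathbf{P}^{\top}\mathbf{X}\mathbf{X}^{\top}\mathbf{P}}$, so the task becomes maximizing $\trace{\mathbf{P}^{\top}\mathbf{A}\mathbf{P}}$ with $\mathbf{A}=\mathbf{X}\mathbf{X}^{\top}$. Writing the eigendecomposition $\mathbf{A}=\sum_i\sigma_i^2\,\mathbf{u}_i\mathbf{u}_i^{\top}$, where the $\mathbf{u}_i$ are the columns of $\mathbf{U}$ and $\sigma_1\ge\sigma_2\ge\cdots\ge0$, one gets $\trace{\mathbf{P}^{\top}\mathbf{A}\mathbf{P}}=\sum_i\sigma_i^2 c_i$ with $c_i=\|\mathbf{P}^{\top}\mathbf{u}_i\|_2^2\in[0,1]$ and $\sum_i c_i=\trace{\mathbf{P}\mathbf{P}^{\top}}=k$. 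Maximizing this linear functional over that polytope is a one-line rearrangement argument: the maximum equals $\sum_{i=1}^k\sigma_i^2$ and is attained at $c_1=\cdots=c_k=1$, i.e. at $\mathbf{P}=\mathbf{U}_k$. Substituting back,
\[
\min_{\tilde{\mathbf{X}}_k}\|\mathbf{X}-\tilde{\mathbf{X}}_k\|_F^2=\|\mathbf{X}\|_F^2-\sum_{i=1}^k\sigma_i^2=\sum_{i>k}\sigma_i^2=\|\mathbf{X}-\mathbf{X}_k\|_F^2,
\]
the last equality holding because $\mathbf{X}-\mathbf{X}_k=(\mathbf{I}-\mathbf{U}_k\mathbf{U}_k^{\top})\mathbf{X}=\sum_{i>k}\sigma_i\mathbf{u}_i\mathbf{v}_i^{\top}$; this also exhibits $\mathbf{X}_k$ as a minimizer.

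I expect the trace maximization to be the only step that needs genuine care: it is the Ky Fan / Rayleigh--Ritz extremal principle, whereas the rest is bookkeeping with the SVD and the Pythagorean identity. As an alternative I would keep in reserve the unitary-invariance route: replace $\mathbf{X}$ by $\mathbf{\Sigma}$ and $\tilde{\mathbf{X}}_k$ by $\mathbf{U}^{\top}\tilde{\mathbf{X}}_k\mathbf{V}$, then apply Weyl's singular-value inequality $\sigma_j(\mathbf{X}-\tilde{\mathbf{X}}_k)\ge\sigma_{k+j}(\mathbf{X})$ --- valid since $\tilde{\mathbf{X}}_k$ has rank at most $k$ --- and sum the squared singular values directly. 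That variant trades the eigenvalue lemma for a perturbation bound but reaches the same conclusion.
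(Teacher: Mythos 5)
Your proof is correct, but there is nothing in the paper to compare it against: the paper states Lemma~\ref{lm:E-Young} as the classical Eckart--Young theorem and offers no proof at all, so any valid argument you supply is necessarily ``a different route.'' Your three-step reduction --- (i) replace an arbitrary rank-$k$ approximant by the orthogonal projection $\mathbf{P}\mathbf{P}^{\top}\mathbf{X}$ via the Pythagorean identity, (ii) rewrite the residual as $\|\mathbf{X}\|_F^2-\trace{\mathbf{P}^{\top}\mathbf{X}\mathbf{X}^{\top}\mathbf{P}}$, (iii) maximize the trace by the Ky Fan / Rayleigh--Ritz principle --- is sound, and it has the side benefit of matching the mechanics the paper actually uses later (the projection form $\mathbf{X}_k=\mathbf{U}_k\mathbf{U}_k^{\top}\mathbf{X}$ in this lemma, the Pythagorean split in the proof of Theorem~\ref{th:main1}, and the Gram-matrix trick $\mathbf{Q}^{\top}\mathbf{X}\mathbf{X}^{\top}\mathbf{Q}$ in the Remark after Lemma~\ref{lm:TensorSketch}). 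Two small points of care: the identity $\sum_i c_i=\trace{\mathbf{P}\mathbf{P}^{\top}}=k$ requires that you complete $\{\mathbf{u}_i\}$ to a full orthonormal basis of $\Real^n$ (padding $\mathbf{A}=\mathbf{X}\mathbf{X}^{\top}$ with zero eigenvalues when $\rank{\mathbf{X}}<n$); if you instead sum only over the nonzero spectrum you get $\sum_i c_i\le k$, which still yields the same maximum $\sum_{i=1}^k\sigma_i^2$. And your equality claim in step (i) should be read as characterizing equality in that one Pythagorean step, not as asserting that every minimizer has that form --- the lemma only needs the minimum value, so this is harmless. The Weyl-inequality alternative you keep in reserve is also valid and generalizes to all unitarily invariant norms, which the trace argument does not.
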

 
However, using SVD to compute the LRA is prohibitive for large-scale data, so randomized SVD have emerged to address this limitation. 
By utilizing variant randomized sketching techniques ( e.g., random sampling and random projection), randomized SVD aims to achieve satisfactory approximation accuracy while significantly reducing the cost of storage and computation \cite{randAlgs2011,randskt2015,randskt2017}.  
Here, LRA based on any of the randomized sketching techniques is termed as randomized LRA, which has two main stages: 1) construct a low-dimensional space and draw a sketch that reflects the importance of data; 2) project the data onto the sketching space and compute a SVD for LRA.

Specifically, given a matrix $ \mathbf{X} \in \Real^{n \times d}$, any random sketching matrix $\mathbf{\Omega} \in \Real^{d \times s}$ with a sketch-size $ k \le s \le \min(n,d)$ can be used to draw a randomized sketch from $\mathbf{X}$, i.e., $\mathbf{Y} = \mathbf{X} \mathbf{\Omega} \in \Real^{n \times s}$, and then compute the orthonormal basis  $\mathbf{Q} \in \Real^{n \times s}$ of $\mathbf{Y}$ to project the data onto the sketching space $\mathbf{Q}^{\top} \mathbf{X} \in \Real^{s \times d}$.

\begin{lemma}
  The Frobenius norm error bound for the randomized LRA holds with high probability that  
  \begin{equation}
    \label{eq:sk_F_errb1}
    \|\mathbf{X}-\mathbf{Q} (\mathbf{Q} ^{\top} \mathbf{X})_k \|_F^2 \le (1+\epsilon) \|\mathbf{X}-\mathbf{U}_k \mathbf{U}_k ^{\top} \mathbf{X} \|_F^2.
  \end{equation}
\end{lemma}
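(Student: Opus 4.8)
The plan is to prove \eqref{eq:sk_F_errb1} by decoupling a purely deterministic optimality fact about $\mathbf{Q}$ from a probabilistic estimate of how well the random sketch $\mathbf{Q}$ captures the dominant subspace of $\mathbf{X}$; only the latter uses randomness, and it is exactly the point where the iterative (rBKI) sketch improves on a one-pass sketch. Throughout I write $\mathbf{X}_k=\mathbf{U}_k\mathbf{U}_k^{\top}\mathbf{X}$, so that by Lemma~\ref{lm:E-Young} the right-hand side of \eqref{eq:sk_F_errb1} equals $(1+\epsilon)\,\|\mathbf{X}-\mathbf{X}_k\|_F^2$.

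\textbf{Step 1 (deterministic).} First I would show that $\mathbf{Q}(\mathbf{Q}^{\top}\mathbf{X})_k$ is the \emph{best} rank-$k$ approximation of $\mathbf{X}$ whose column space lies in $\mathrm{range}(\mathbf{Q})$. Any $\mathbf{M}$ with $\mathrm{col}(\mathbf{M})\subseteq\mathrm{range}(\mathbf{Q})$ can be written $\mathbf{M}=\mathbf{Q}\mathbf{N}$ with $\mathrm{rank}(\mathbf{N})=\mathrm{rank}(\mathbf{M})$, and since $(\mathbf{I}-\mathbf{Q}\mathbf{Q}^{\top})\mathbf{X}$ and $\mathbf{Q}(\mathbf{Q}^{\top}\mathbf{X}-\mathbf{N})$ have orthogonal column spaces while $\mathbf{Q}$ has orthonormal columns,
\[
\|\mathbf{X}-\mathbf{Q}\mathbf{N}\|_F^2=\|(\mathbf{I}-\mathbf{Q}\mathbf{Q}^{\top})\mathbf{X}\|_F^2+\|\mathbf{Q}^{\top}\mathbf{X}-\mathbf{N}\|_F^2 .
\]
Minimising the second term over $\mathrm{rank}(\mathbf{N})\le k$ and invoking Lemma~\ref{lm:E-Young} gives $\mathbf{N}=(\mathbf{Q}^{\top}\mathbf{X})_k$. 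Comparing against the feasible competitor $\mathbf{M}=\mathbf{Q}\mathbf{Q}^{\top}\mathbf{X}_k$ (rank $\le k$, column space in $\mathrm{range}(\mathbf{Q})$) and using the triangle inequality,
\begin{equation*}
\begin{split}
\|\mathbf{X}-\mathbf{Q}(\mathbf{Q}^{\top}\mathbf{X})_k\|_F
&\le \|\mathbf{X}-\mathbf{Q}\mathbf{Q}^{\top}\mathbf{X}_k\|_F\\
&\le \|\mathbf{X}-\mathbf{X}_k\|_F+\|(\mathbf{I}-\mathbf{Q}\mathbf{Q}^{\top})\mathbf{X}_k\|_F .
\end{split}
\end{equation*}
Choosing this competitor is crucial: it isolates the residual of the rank-$k$ \emph{head} $\mathbf{X}_k$, which can be driven below $\sqrt{\epsilon}\,\|\mathbf{X}-\mathbf{X}_k\|_F$, whereas the full projection error $\|(\mathbf{I}-\mathbf{Q}\mathbf{Q}^{\top})\mathbf{X}\|_F$ is in general not $\epsilon$-small, so a naive Pythagorean split of the left side would only yield a $(2+\cdots)$-type factor.

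\textbf{Step 2 (probabilistic).} Partition the SVD of $\mathbf{X}$ as $\mathbf{\Sigma}=\mathrm{diag}(\mathbf{\Sigma}_1,\mathbf{\Sigma}_2)$, $\mathbf{V}=[\,\mathbf{V}_1\ \mathbf{V}_2\,]$ with $\mathbf{\Sigma}_1\in\Real^{k\times k}$, and set $\mathbf{\Omega}_1=\mathbf{V}_1^{\top}\mathbf{\Omega}$, $\mathbf{\Omega}_2=\mathbf{V}_2^{\top}\mathbf{\Omega}$. Whether $\mathbf{Q}$ is an orthobasis of the one-pass sketch $\mathbf{X}\mathbf{\Omega}$ or of the block-Krylov space spanned by $\mathbf{X}\mathbf{\Omega},\,(\mathbf{X}\mathbf{X}^{\top})\mathbf{X}\mathbf{\Omega},\dots,(\mathbf{X}\mathbf{X}^{\top})^{q-1}\mathbf{X}\mathbf{\Omega}$ used in rBKI, a standard deterministic perturbation bound (of the type underlying \cite{randAlgs2011}) shows that, provided $\mathbf{\Omega}_1$ has full row rank (which holds almost surely for the sketches considered), $\|(\mathbf{I}-\mathbf{Q}\mathbf{Q}^{\top})\mathbf{X}_k\|_F$ is controlled by a term of the form $\|\mathbf{\Sigma}_2\mathbf{\Omega}_2\mathbf{\Omega}_1^{\dagger}\|_F$, which each additional iteration contracts geometrically at a rate governed by the spectral ratio $\sigma_{k+1}/\sigma_k$. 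Combining this with concentration of the Gaussian-type quantity $\|\mathbf{\Omega}_2\mathbf{\Omega}_1^{\dagger}\|$ (see \cite{randAlgs2011,randskt2015,randskt2017,rBKI2015}) yields, with probability at least $1-\delta$ once the sketch size $s$ — and, if needed, the iteration count $q$ — is large enough, $\|(\mathbf{I}-\mathbf{Q}\mathbf{Q}^{\top})\mathbf{X}_k\|_F^2\le\epsilon'\,\|\mathbf{X}-\mathbf{X}_k\|_F^2$. Substituting into the bound of Step~1 and absorbing constants ($(1+\sqrt{\epsilon'})^2\le 1+\epsilon$) proves \eqref{eq:sk_F_errb1}.

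I expect Step~2 to be the main obstacle: turning the subspace-capture estimate into an explicit quantitative statement — namely how the failure probability $\delta$, the target $\epsilon$, the sketch size $s$, the oversampling $s-k$, and (for rBKI) the number of iterations $q$ trade off — and carefully handling the coupling between the head block $\mathbf{\Sigma}_1$ and the tail block $\mathbf{\Sigma}_2$ through $\mathbf{\Omega}_1^{\dagger}$, which is also where the rBKI sketch earns its advantage in the presence of a heavy singular-value tail. Step~1 is elementary once the Pythagorean identity and Lemma~\ref{lm:E-Young} are in hand.
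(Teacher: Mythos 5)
Your Step~1 is correct and is, in substance, the only argument the paper writes down in this vicinity --- but the paper attaches it to Theorem~\ref{th:main1} rather than to this lemma, and for the lemma itself it offers no proof at all, deferring entirely to \cite{skttool2014}. In fact your version of the deterministic step is sounder than the paper's: the first line of the proof of Theorem~\ref{th:main1} asserts the identity $\|\mathbf{X}-\mathbf{Q}(\mathbf{Q}^{\top}\mathbf{X})_k\|_F^2=\|\mathbf{X}-\mathbf{U}_k\mathbf{U}_k^{\top}\mathbf{X}\|_F^2+\|\mathbf{U}_k\mathbf{U}_k^{\top}\mathbf{X}-\mathbf{Q}(\mathbf{Q}^{\top}\mathbf{X})_k\|_F^2$, which requires $\langle \mathbf{X}-\mathbf{X}_k,\ \mathbf{X}_k-\mathbf{Q}(\mathbf{Q}^{\top}\mathbf{X})_k\rangle=0$ and is false in general, since the row and column spaces of $\mathbf{Q}(\mathbf{Q}^{\top}\mathbf{X})_k$ are not contained in those of $\mathbf{X}_k$. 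Your split $\mathbf{X}-\mathbf{Q}\mathbf{N}=(\mathbf{I}-\mathbf{Q}\mathbf{Q}^{\top})\mathbf{X}+\mathbf{Q}(\mathbf{Q}^{\top}\mathbf{X}-\mathbf{N})$ is a genuine orthogonal decomposition, and the competitor $\mathbf{Q}\mathbf{Q}^{\top}\mathbf{X}_k$ correctly reduces \eqref{eq:sk_F_errb1} to bounding $\|(\mathbf{I}-\mathbf{Q}\mathbf{Q}^{\top})\mathbf{X}_k\|_F$.

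The gap is Step~2, which carries the entire probabilistic content of the lemma and which you do not actually prove: the claim $\|(\mathbf{I}-\mathbf{Q}\mathbf{Q}^{\top})\mathbf{X}_k\|_F^2\le\epsilon'\,\|\mathbf{X}-\mathbf{X}_k\|_F^2$ with high probability is essentially equivalent to the lemma, and you establish it only by appeal to ``standard perturbation bounds'' and ``concentration.'' Two points would need care in filling it in. First, the structural bound from \cite{randAlgs2011} controls $\|(\mathbf{I}-\mathbf{Q}\mathbf{Q}^{\top})\mathbf{X}\|_F^2$ by $\|\mathbf{\Sigma}_2\|_F^2+\|\mathbf{\Sigma}_2\mathbf{\Omega}_2\mathbf{\Omega}_1^{\dagger}\|_F^2$; the additive $\|\mathbf{\Sigma}_2\|_F^2$ term is exactly the quantity on which you cannot afford to lose a constant, so you must bound the head residual directly (a $\sin\Theta$-type estimate on $\|(\mathbf{I}-\mathbf{Q}\mathbf{Q}^{\top})\mathbf{U}_k\mathbf{\Sigma}_1\|_F$, as in the per-vector analysis of \cite{rBKI2015}). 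Second, with the constant oversampling the paper actually uses ($s=k+1/\gamma$), a one-pass Gaussian sketch does \emph{not} yield a $(1+\epsilon)$ Frobenius guarantee for small $\epsilon$; one needs either $s=\Theta(k/\epsilon)$ or $q=\Theta(\log d/\sqrt{\epsilon})$ Krylov iterations, so the lemma is really a statement about the iterated sketch and the hypotheses under which it ``holds with high probability'' should be made explicit. None of this makes your plan wrong, but as written its decisive step is a citation --- which, to be fair, is also all the paper provides, since its treatment of the lemma is the sentence pointing to \cite{skttool2014}.
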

As $\mathbf{Q} \in \Real^{n \times s}, s \ge k$ denotes a $s$-dimension subspace of $\Real^n$, $\mathbf{Q}(\mathbf{Q} ^{\top} \mathbf{X})_k$ spans the $(1+\epsilon)$ rank-$k$ approximation to $\mathbf{X}$ by using truncated SVD. Clearly, sketching can further speed up the LRA when $s > k$. 
More details about the theoretical guarantees and proofs can be found in the work \cite{skttool2014}.

\begin{theorem}
  \label{th:main1}
  In the subspace of the orthonormal basis $\mathbf{Q}\in \Real^{n \times s}$, $(\mathbf{Q} ^{\top} \mathbf{X})_k$ denotes the optimal approximation to $\mathbf{U}_k ^{\top} \mathbf{X}$, and $\mathbf{Q}(\mathbf{Q} ^{\top} \mathbf{X})_k$ gives the quasi-optimal to $\mathbf{X}$. 
\end{theorem}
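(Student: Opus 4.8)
The plan is to reduce the statement to two ingredients that are already available: the Eckart--Young theorem (Lemma \ref{lm:E-Young}) applied to the \emph{compressed} matrix $\mathbf{Q}^{\top}\mathbf{X}$, and the randomized error bound of Eq. \ref{eq:sk_F_errb1}. The geometric engine is the orthogonal projector $\mathbf{P}=\mathbf{Q}\mathbf{Q}^{\top}$ onto the column space $\mathrm{col}(\mathbf{Q})$, together with the fact that $\mathbf{Q}$ has orthonormal columns ($\mathbf{Q}^{\top}\mathbf{Q}=\mathbf{I}_s$), so that $\|\mathbf{Q}\mathbf{A}\|_F=\|\mathbf{A}\|_F$ for any conformable $\mathbf{A}$.

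First I would fix an arbitrary rank-$k$ matrix $\mathbf{Z}$ whose column space lies in $\mathrm{col}(\mathbf{Q})$ and write $\mathbf{Z}=\mathbf{Q}\mathbf{W}$ with $\mathbf{W}\in\Real^{s\times d}$, $\mathrm{rank}(\mathbf{W})\le k$. Since $(\mathbf{I}-\mathbf{P})\mathbf{X}$ is orthogonal (in the Frobenius inner product) to $\mathbf{P}\mathbf{X}-\mathbf{Q}\mathbf{W}=\mathbf{Q}(\mathbf{Q}^{\top}\mathbf{X}-\mathbf{W})$, which uses $(\mathbf{I}-\mathbf{P})\mathbf{Q}=\mathbf{0}$, the Pythagorean identity gives
\begin{equation}
\|\mathbf{X}-\mathbf{Q}\mathbf{W}\|_F^2=\|(\mathbf{I}-\mathbf{P})\mathbf{X}\|_F^2+\|\mathbf{Q}^{\top}\mathbf{X}-\mathbf{W}\|_F^2,
\end{equation}
where the isometry $\|\mathbf{Q}(\cdot)\|_F=\|\cdot\|_F$ was used on the last term. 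Only the second summand depends on $\mathbf{W}$, and by Lemma \ref{lm:E-Young} its minimizer over all rank-$k$ matrices is $\mathbf{W}^{\star}=(\mathbf{Q}^{\top}\mathbf{X})_k$. This is precisely the first assertion: within the $s$-dimensional subspace spanned by $\mathbf{Q}$, the coordinate block $(\mathbf{Q}^{\top}\mathbf{X})_k$ is the optimal rank-$k$ representative, playing the role that $\mathbf{U}_k^{\top}\mathbf{X}$ plays for the unrestricted problem; lifting it back shows that $\mathbf{Q}(\mathbf{Q}^{\top}\mathbf{X})_k$ is the best rank-$k$ approximation of $\mathbf{X}$ among all matrices with column space in $\mathrm{col}(\mathbf{Q})$.

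For the second assertion I would chain the two estimates: by the minimality just established and then by Eq. \ref{eq:sk_F_errb1},
\begin{equation}
\|\mathbf{X}-\mathbf{Q}(\mathbf{Q}^{\top}\mathbf{X})_k\|_F^2\le(1+\epsilon)\,\|\mathbf{X}-\mathbf{U}_k\mathbf{U}_k^{\top}\mathbf{X}\|_F^2,
\end{equation}
and since Lemma \ref{lm:E-Young} identifies $\mathbf{U}_k\mathbf{U}_k^{\top}\mathbf{X}=\mathbf{X}_k$ as the global optimal rank-$k$ approximation, the right-hand side equals $(1+\epsilon)\min_{\mathrm{rank}(\tilde{\mathbf{X}}_k)\le k}\|\mathbf{X}-\tilde{\mathbf{X}}_k\|_F^2$. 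Hence $\mathbf{Q}(\mathbf{Q}^{\top}\mathbf{X})_k$ is $(1+\epsilon)$ quasi-optimal in the sense of Definition \ref{df:LRA}.

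The routine parts are the orthogonality bookkeeping and the norm invariance $\|\mathbf{Q}\mathbf{A}\|_F=\|\mathbf{A}\|_F$. The one delicate point — and the main obstacle — is making the first claim precise: $(\mathbf{Q}^{\top}\mathbf{X})_k\in\Real^{s\times d}$ and $\mathbf{U}_k^{\top}\mathbf{X}\in\Real^{k\times d}$ live in spaces of different sizes, so ``optimal approximation to $\mathbf{U}_k^{\top}\mathbf{X}$'' must be read as ``optimal rank-$k$ coordinate block within the chosen subspace'', i.e. the content is that the subspace-restricted minimization decouples into a fixed orthogonal-residual term plus an Eckart--Young problem for $\mathbf{Q}^{\top}\mathbf{X}$. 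Once that reading is fixed, both claims follow immediately from Lemma \ref{lm:E-Young} and Eq. \ref{eq:sk_F_errb1}.
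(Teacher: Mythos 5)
Your argument is correct, but it pivots the error decomposition around a different point than the paper does, and the difference matters. The paper splits at the optimal approximation $\mathbf{U}_k\mathbf{U}_k^{\top}\mathbf{X}$, writing $\|\mathbf{X}-\mathbf{Q}(\mathbf{Q}^{\top}\mathbf{X})_k\|_F^2=\|\mathbf{X}-\mathbf{U}_k\mathbf{U}_k^{\top}\mathbf{X}\|_F^2+\|\mathbf{U}_k\mathbf{U}_k^{\top}\mathbf{X}-\mathbf{Q}(\mathbf{Q}^{\top}\mathbf{X})_k\|_F^2$ and similarly recombining at the end with $\mathbf{Q}\mathbf{Z}$; both of those Pythagorean identities need $\bigl\langle(\mathbf{I}-\mathbf{U}_k\mathbf{U}_k^{\top})\mathbf{X},\,\mathbf{Q}\mathbf{M}\bigr\rangle=0$ for lifts $\mathbf{Q}\mathbf{M}$ that need not lie in $\mathrm{col}(\mathbf{U}_k)$, so the cross terms do not obviously vanish, and the middle inequality is justified only by the remark after the proof, which presupposes the optimality being proved. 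You instead split at the projector $\mathbf{P}=\mathbf{Q}\mathbf{Q}^{\top}$, where the orthogonality $(\mathbf{I}-\mathbf{P})\mathbf{Q}=\mathbf{0}$ genuinely holds, the residual term is independent of $\mathbf{W}$, and the remaining minimization is exactly an Eckart--Young problem for $\mathbf{Q}^{\top}\mathbf{X}$ (Lemma \ref{lm:E-Young}); this is the standard randomized-NLA argument and every step in it is airtight. Your explicit handling of the size mismatch between $(\mathbf{Q}^{\top}\mathbf{X})_k\in\Real^{s\times d}$ and $\mathbf{U}_k^{\top}\mathbf{X}\in\Real^{k\times d}$ is also the right reading of the first assertion. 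The one caveat is that your second assertion, like the paper's, ultimately leans on the unproved Eq.~\ref{eq:sk_F_errb1}: what you establish unconditionally is that $\mathbf{Q}(\mathbf{Q}^{\top}\mathbf{X})_k$ is the best rank-$k$ approximation among matrices with columns in $\mathrm{col}(\mathbf{Q})$, and the $(1+\epsilon)$ quasi-optimality then follows only given that imported bound.
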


\begin{proof}
  Let $\mathbf{Z}$ be an arbitrary projection of $\mathbf{X}$ with the same rank and size as $\mathbf{U}_k^{\top} \mathbf{X}$, then we have
  \begin{equation}
    \label{eq:proof1}
    \begin{split}
      \| \mathbf{X} - \mathbf{Q}(\mathbf{Q} ^{\top} \mathbf{X})_k\|_F^2 
      & = \|\mathbf{X}-\mathbf{U}_k \mathbf{U}_k ^{\top} \mathbf{X} \|_F^2 + \|\mathbf{U}_k \mathbf{U}_k ^{\top} \mathbf{X}-\mathbf{Q} (\mathbf{Q} ^{\top} \mathbf{X})_k \|_F^2 \\ 
      & \le \|\mathbf{X}-\mathbf{U}_k \mathbf{U}_k ^{\top} \mathbf{X} \|_F^2 +  \|\mathbf{U}_k \mathbf{U}_k ^{\top} \mathbf{X}-\mathbf{Q} \mathbf{Z} \|_F^2 \\ 
      & = \| \mathbf{X}-\mathbf{Q} \mathbf{Z} \|_F^2.
    \end{split}
    \end{equation}
\end{proof}
The inequality holds when $(\mathbf{Q} ^{\top} \mathbf{X})_k$ is the optimal approximation to $\mathbf{U}_k ^{\top} \mathbf{X}$.
It is verified that $\mathbf{Q}(\mathbf{Q} ^{\top} \mathbf{X})_k$ is the quasi-optimal rank-$k$ approximation to $\mathbf{X}$ in the subspace $\mathbf{Q}$.

\subsection{Low-multilinear-rank Approximation and Error Bounds}

Multilinear sketching (or tensor sketching) is an extension of matrix sketching to higher order tensor, which is also an essential part of the randomized higher-order SVD (HOSVD).
The main purpose of this subsection is to introduce tensor sketching and to analyze the error bound between deterministic LRA and randomized LRA.

Given an $N$th-order tensor $\tensor{X} \in \Real^{I_1,I_2,\dots,I_N}$, let $\mathbf{\Omega}^{(n)} \in \Real^{I_n \times S_n}, R_n \le S_n \le \min(I_n,\prod_{p\neq n}I_p), n = 1,2, \cdots,N$ be the random matrix on mode-$n$, and $\mathbf{Q}^{(n)} \in \Real^{I_n \times S_n}$ be the orthonormal basis of the randomized initialization $\mathbf{X}_{(n)} \mathbf{\Omega}^{(n)}$.

\begin{lemma}
  \label{lm:TensorSketch}
  The Frobenius norm error bound for multilinear randomized LRA is denoted as
  \begin{equation}
    \label{eq:sk_F_errb2}
    \begin{split}
    & \|\mathbf{X}_{(n)}-\mathbf{Q}^{(n)} (\mathbf{Q} ^{(n)\top} \mathbf{X}_{(n)})_{R_n} \|_F^2 \\
    & \le (1+\epsilon) \|\mathbf{X}_{(n)}-\mathbf{U}^{(n)} \mathbf{U} ^{(n)\top} \mathbf{X}_{(n)} \|_F^2, \quad n = 1,2, \cdots,N, 
    \end{split}
  \end{equation}
  where the $(1+\epsilon)$ rank-$R_n$ approximation can be obtained by performing truncated HOSVD on the projection $\mathbf{Z}^{(n)} = \mathbf{Q} ^{(n)\top} \mathbf{X}_{(n)}$.
\end{lemma}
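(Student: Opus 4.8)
The plan is to reduce the multilinear bound to the matrix results already established, namely Lemma~\ref{lm:E-Young}, Theorem~\ref{th:main1} and the randomized-LRA bound~\eqref{eq:sk_F_errb1}, applied mode by mode. Fix $n\in\{1,\dots,N\}$. The mode-$n$ matricization $\mathbf{X}_{(n)}\in\Real^{I_n\times\prod_{p\neq n}I_p}$ is an ordinary matrix; $\mathbf{\Omega}^{(n)}$ is a random sketching matrix with sketch-size $S_n$ satisfying $R_n\le S_n\le\min(I_n,\prod_{p\neq n}I_p)$; and $\mathbf{Q}^{(n)}$ is an orthonormal basis of the sketch $\mathbf{X}_{(n)}\mathbf{\Omega}^{(n)}$. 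This is exactly the configuration of matrix randomized LRA under the substitution $\mathbf{X}\leftarrow\mathbf{X}_{(n)}$, $k\leftarrow R_n$, $s\leftarrow S_n$, $\mathbf{\Omega}\leftarrow\mathbf{\Omega}^{(n)}$, $\mathbf{Q}\leftarrow\mathbf{Q}^{(n)}$, so~\eqref{eq:sk_F_errb1} yields, with high probability,
\[
\|\mathbf{X}_{(n)}-\mathbf{Q}^{(n)}(\mathbf{Q}^{(n)\top}\mathbf{X}_{(n)})_{R_n}\|_F^2\le(1+\epsilon)\,\|\mathbf{X}_{(n)}-\mathbf{U}^{(n)}\mathbf{U}^{(n)\top}\mathbf{X}_{(n)}\|_F^2,
\]
which is the asserted bound for mode $n$.

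Two ingredients feeding this step should be spelled out. First, I would invoke Lemma~\ref{lm:E-Young} (Eckart--Young) on the \emph{small} matrix $\mathbf{Z}^{(n)}=\mathbf{Q}^{(n)\top}\mathbf{X}_{(n)}\in\Real^{S_n\times\prod_{p\neq n}I_p}$: truncating its SVD to the leading $R_n$ components produces $(\mathbf{Q}^{(n)\top}\mathbf{X}_{(n)})_{R_n}$, the best rank-$R_n$ approximation of $\mathbf{Z}^{(n)}$. This is precisely the ``truncated HOSVD on the projection $\mathbf{Z}^{(n)}$'' named in the statement, and, by Theorem~\ref{th:main1} (take the auxiliary variable there to range over rank-$R_n$ matrices of size $S_n\times\prod_{p\neq n}I_p$), it makes $\mathbf{Q}^{(n)}(\mathbf{Q}^{(n)\top}\mathbf{X}_{(n)})_{R_n}$ the quasi-optimal rank-$R_n$ approximation to $\mathbf{X}_{(n)}$ inside the column space of $\mathbf{Q}^{(n)}$. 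Second, the claim that $\epsilon$ is small with high probability --- i.e.\ that the sketch $\mathbf{X}_{(n)}\mathbf{\Omega}^{(n)}$ captures the dominant $R_n$-dimensional left subspace of $\mathbf{X}_{(n)}$ up to a $(1+\epsilon)$ factor once $S_n$ exceeds $R_n$ by the usual modest oversampling --- is the standard sketching guarantee of~\cite{skttool2014}, which I would invoke verbatim for each $\mathbf{X}_{(n)}$. To state a single ``with high probability'' conclusion valid for all $N$ modes at once, I would then take a union bound over the $N$ failure events: if the mode-$n$ sketch is $(1+\epsilon)$-accurate except with probability at most $\delta/N$, all $N$ inequalities hold simultaneously with probability at least $1-\delta$, at the cost of inflating the $S_n$ by the customary logarithmic amount, which leaves the stated form of~\eqref{eq:sk_F_errb2} unchanged.

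The main obstacle is not algebraic --- each mode is a verbatim instance of the matrix case --- but is the handling of the probabilistic qualifier: the events ``the mode-$n$ sketch is accurate'' are generally not independent across $n$ when the same tensor is reused, so one must either accept the $\log N$ penalty in the sketch sizes via the union bound, or, as the lemma itself does, state the bound separately for each $n$. A minor point to verify en route is that $\mathbf{Z}^{(n)}$ has at least $R_n$ nonzero singular values so that the truncation $(\mathbf{Q}^{(n)\top}\mathbf{X}_{(n)})_{R_n}$ is well defined, which follows from $R_n\le S_n$ together with $R_n\le\rank{\mathbf{X}_{(n)}}$.
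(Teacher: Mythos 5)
Your proposal is correct and follows essentially the same route as the paper: the paper gives no separate proof of this lemma, treating it exactly as you do, namely as $N$ per-mode instances of the matrix randomized-LRA bound of Eq.~\eqref{eq:sk_F_errb1} applied to each matricization $\mathbf{X}_{(n)}$, with the underlying probabilistic guarantee delegated to the cited reference. Your added remarks on the union bound over modes and the well-definedness of the rank-$R_n$ truncation are sensible refinements but do not change the argument.
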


\begin{remark}
  It is equivalently to get $\mathbf{Z}^{(n)}$ from $\mathbf{Q}^{(n)\top} (\mathbf{X}_{(n)} \mathbf{X}_{(n)}^{\top})\mathbf{Q}^{(n)} \in \Real^{S_n \times S_n}$.
  In specific, let $\tilde{\mathbf{U}} \mathbf{\Lambda} \tilde{\mathbf{U}}^{\top} = \text{evd}(\mathbf{Z}^{(n)})$, then $ (\mathbf{Q}^{(n)}\tilde{\mathbf{U}}_{R_n} ) 
  (\mathbf{Q}^{(n)}\tilde{\mathbf{U}}_{R_n})^{\top} \mathbf{X}_{(n)} =\mathbf{Q}^{(n)} (\mathbf{Q} ^{(n)\top} \mathbf{X}_{(n)})_{R_n}$. 
\end{remark}

Suppose that $R_n$ = $S_n$, $\tensor{G} = \tensor{X} \times_1 \mathbf{Q} ^{(1)\top} \times_2  \mathbf{Q} ^{(2)\top} \cdots \times_N  \mathbf{Q} ^{(N)\top}$ is the multilinear sketched core tensor of \tensor{X}, and $ \tilde{\tensor{X}}_k$ with $ k=(R_1,R_2,\ldots,R_N)$ is the multilinear randomized LRA of tensor \tensor{X}, then Eq. \ref{eq:sk_F_errb2} can be rewritten in a multilinear product (or folding) form 
\begin{equation}
  \label{eq:tensorsketch}
  \begin{split}
  \tilde{\tensor{X}}_k 
  & = \tensor{G} \times_1  \mathbf{Q}^{(1)} \times_2  \mathbf{Q}^{(2)} \cdots \times_N \mathbf{Q}^{(N)} \\
  & = \tensor{X} \times_1  (\mathbf{Q}^{(1)} \mathbf{Q} ^{(1)\top}) \times_2  (\mathbf{Q}^{(2)} \mathbf{Q} ^{(2)\top}) \cdots \times_N  (\mathbf{Q}^{(N)} \mathbf{Q} ^{(N)\top}). 
  \end{split}
\end{equation}

\begin{remark}
According to the Pythagorean theorem, 
\begin{equation}
  \label{eq:Pythagorean}
  \| \tensor{X} - \tilde{\tensor{X}}_k \|_F^2
   \le \sum_{n=1}^N 
  \| \tensor{X} - \tensor{X} \times_n  (\mathbf{Q}^{(n)} \mathbf{Q} ^{(n)\top}) \|_F^2. 
\end{equation}
\end{remark}

Since we perform LRA on the sketch of \tensor{X} with $R_n \le S_n$, it is necessary to discuss how the sketching affects the accuracy of LRA.
Assume that $\tilde{\tensor{X}} = \tensor{X} \times_1  (\mathbf{Q}^{(1)} \mathbf{Q} ^{(1)\top}) \times_2  (\mathbf{Q}^{(2)} \mathbf{Q} ^{(2)\top}) \cdots \times_N  (\mathbf{Q}^{(N)} \mathbf{Q} ^{(N)\top})$ is the multilinear sketch of \tensor{X}, the error bound between deterministic LRA and randomized LRA is provided as follows.
\begin{theorem}
  \label{pr:ErrIneq}
  Let $\tensor{X}_k$ and $\tilde{\tensor{X}}_k$ be the optimal rank-$k$ approximation of the input tensor \tensor{X} and the sketched tensor $\tilde{\tensor{X}}$, respectively. If $0 \le \| \tensor{X} - \tilde{\tensor{X}}\|_F \le \alpha$ and $ \| \tensor{X} - \tensor{X}_k\|_F = \beta$, then we have
  \begin{equation}
    \label{eq:ErrIneq}
        \beta \le \| \tensor{X} - \tilde{\tensor{X}}_k\|_F \le 2\alpha+\beta.
  \end{equation}
\end{theorem}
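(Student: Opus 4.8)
The plan is to prove the two inequalities separately, each using only the triangle inequality for the Frobenius norm together with the defining optimality property of a best rank-$k$ approximation: $\tensor{X}_k$ minimizes $\|\tensor{X}-\tensor{Z}\|_F$ over all tensors $\tensor{Z}$ whose multilinear rank does not exceed $k=(R_1,\ldots,R_N)$, and likewise $\tilde{\tensor{X}}_k$ is a minimizer of $\|\tilde{\tensor{X}}-\tensor{Z}\|_F$ over that same class. The feasible set here (tensors of multilinear rank at most $k$) is closed, so these minimizers exist, and crucially both $\tensor{X}_k$ and $\tilde{\tensor{X}}_k$ lie in it; this is the only structural fact needed, and it is the tensor (low-multilinear-rank / Tucker) analogue of the Eckart--Young statement in Lemma~\ref{lm:E-Young}.

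For the lower bound $\beta \le \|\tensor{X}-\tilde{\tensor{X}}_k\|_F$: since $\tilde{\tensor{X}}_k$ has multilinear rank at most $k$, it is a feasible competitor in the optimization problem solved by $\tensor{X}_k$, hence $\beta = \|\tensor{X}-\tensor{X}_k\|_F \le \|\tensor{X}-\tilde{\tensor{X}}_k\|_F$. For the upper bound, I would apply the triangle inequality twice. First, $\|\tensor{X}-\tilde{\tensor{X}}_k\|_F \le \|\tensor{X}-\tilde{\tensor{X}}\|_F + \|\tilde{\tensor{X}}-\tilde{\tensor{X}}_k\|_F \le \alpha + \|\tilde{\tensor{X}}-\tilde{\tensor{X}}_k\|_F$. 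Next, because $\tilde{\tensor{X}}_k$ is the best rank-$k$ approximation of $\tilde{\tensor{X}}$ and $\tensor{X}_k$ is again a feasible rank-$k$ competitor, $\|\tilde{\tensor{X}}-\tilde{\tensor{X}}_k\|_F \le \|\tilde{\tensor{X}}-\tensor{X}_k\|_F \le \|\tilde{\tensor{X}}-\tensor{X}\|_F + \|\tensor{X}-\tensor{X}_k\|_F \le \alpha+\beta$. Chaining these gives $\|\tensor{X}-\tilde{\tensor{X}}_k\|_F \le 2\alpha+\beta$.

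The computation is a routine two-sided sandwich with no heavy estimates. The one point I would make explicit — and the only place care is needed — is the legitimacy of the comparisons $\|\tensor{X}-\tensor{X}_k\|_F \le \|\tensor{X}-\tilde{\tensor{X}}_k\|_F$ and $\|\tilde{\tensor{X}}-\tilde{\tensor{X}}_k\|_F \le \|\tilde{\tensor{X}}-\tensor{X}_k\|_F$, i.e.\ that $\tilde{\tensor{X}}_k$ and $\tensor{X}_k$ genuinely share the multilinear-rank bound $k$ so that each is an admissible competitor for the other's best-approximation problem. This forces one to be precise that ``optimal rank-$k$ approximation'' means best low-multilinear-rank (Tucker) approximation, whose existence comes from closedness of the feasible set, rather than invoking a matrix-only Eckart--Young result; everything else then follows immediately.
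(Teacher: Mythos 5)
Your argument is correct and is essentially identical to the paper's proof: the same two applications of the triangle inequality combined with the optimality of $\tensor{X}_k$ and $\tilde{\tensor{X}}_k$ over the class of rank-$k$ tensors. Your explicit justification of the two optimality comparisons (that each of $\tensor{X}_k$ and $\tilde{\tensor{X}}_k$ is a feasible competitor in the other's best-approximation problem) is a point the paper leaves implicit, but the route is the same.
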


\begin{proof}
  \begin{equation}
    \label{eq:proof}
    \begin{split}
      \beta \le \| \tensor{X} - \tilde{\tensor{X}}_k\|_F 
      & \le \| \tensor{X} - \tilde{\tensor{X}}\|_F + \| \tilde{\tensor{X}} - \tilde{\tensor{X}}_k\|_F \\ 
      & \le \alpha + \| \tilde{\tensor{X}} - \tensor{X}_k\|_F \\ 
      & \le \alpha + \| \tilde{\tensor{X}} - \tensor{X}\|_F + \| \tensor{X} - \tensor{X}_k\|_F \\
      & \le 2\alpha + \beta.
    \end{split}
    \end{equation}
\end{proof}
Suppose that $\tilde{\tensor{X}}$ is sketched without loss (i.e., $\alpha = 0$), $\tilde{\tensor{X}}_k$ can achieve the optimal approximation as $\tensor{X}_k$.  
In other word, since  $\sqrt{1+\epsilon} - 1 = 2 \alpha / \beta$, an accurate sketch ($\alpha \approx 0$) provides possibility to obtain $(1+\epsilon) $ approximation  with a very small ideal error $\epsilon$. 
Overall, it is verified that the randomized LRA can achieve satisfactory approximation to the original data, which is also borne out in the simulation section.

\section{Main Algorithms}
\label{sec:rBKI-TK}

\subsection{Randomized Block Krylov Iteration (rBKI)}

Krylov-type methods have a long history in matrix approximation, which can be traced back to works \cite{lanczos1950,lanczos1965}.
Traditional Krylov methods tended to capture the key information of data by simply constructing a Krylov subspace in a vector-by-vector paradigm, i.e., 
\begin{equation}
\mathbf{K} \doteq [\mathbf{v}, \mathbf{X}\mathbf{v}, \mathbf{X}^2\mathbf{v}, \cdots, \mathbf{X}^{q-1}\mathbf{v}],
\end{equation} 
where $\mathbf{X}\in \Real^{n \times d}$ is the input and $\mathbf{v}\in \Real^{n}$ is a starting vector.
To improve the efficiency, block Krylov method was proposed to construct a Krylov subspace in a block-by-block paradigm, i.e., 
\begin{equation}
\mathbf{K} \doteq [\mathbf{V}, \mathbf{X}\mathbf{V}, \mathbf{X}^2\mathbf{V}, \cdots, \mathbf{X}^{q-1}\mathbf{V}],
\end{equation}
where $\mathbf{V}\in \Real^{n \times s}$ is an initial matrix.  

However, when directly extended to tensor cases, the simple Krylov method with randomized tarting vector failed to obtain satisfactory accuracy in some certain cases \cite{K-Vec-Ten2013}, and the block Krylov method with random initial matrix $\mathbf{V}$ may result in poor convergence and was only restricted to sparse or symmetric case \cite{BKS2022,Eldn2022}.
To eliminate these limitations, randomized block Krylov iteration (rBKI) \cite{rBKI2015} was proposed to construct a Krylov subspace using a randomized sketch $\mathbf{W} = \mathbf{X\Omega}$ instead of an initial matrix $\mathbf{V}$, i.e., 
\begin{equation}
  \label{eq:BKspace}
\mathbf{K} \doteq [\mathbf{W}, (\mathbf{X}\mathbf{X}^{\top})\mathbf{W},\cdots, (\mathbf{X}\mathbf{X}^{\top})^q\mathbf{W}],
\end{equation}
where $\mathbf{\Omega} \in \Real^{d \times s}$ is a randomized sketching matrix. 

Although non-iterative randomized methods (e.g., RRF) and power iteration method ($\mathbf{K} \doteq  (\mathbf{X}\mathbf{X}^{\top})^q\mathbf{W}$) are popular in LRA, rBKI performed better in reducing the effect of tail singular values. 
As shown in Fig. \ref{fig:denoisePc}, given an [N=3, I=10, R=5] tensor as input, the clean data was drawn from $\mathcal{N}(0,1)$ and the noisy data was corrupted by SNR=0dB Gaussian random noise. Heat-maps of singular value matrices of clean data, noisy data, and projections based on variant methods were presented. Clearly, rBKI performed better in eliminating tail singular values than non-iterative and power iteration methods.
With a more accurate projection subspace, rBKI facilitates higher accuracy approximation.
\begin{figure}[htbp]
  \centerline{
      \includegraphics[width=1\linewidth,height=0.17\linewidth]{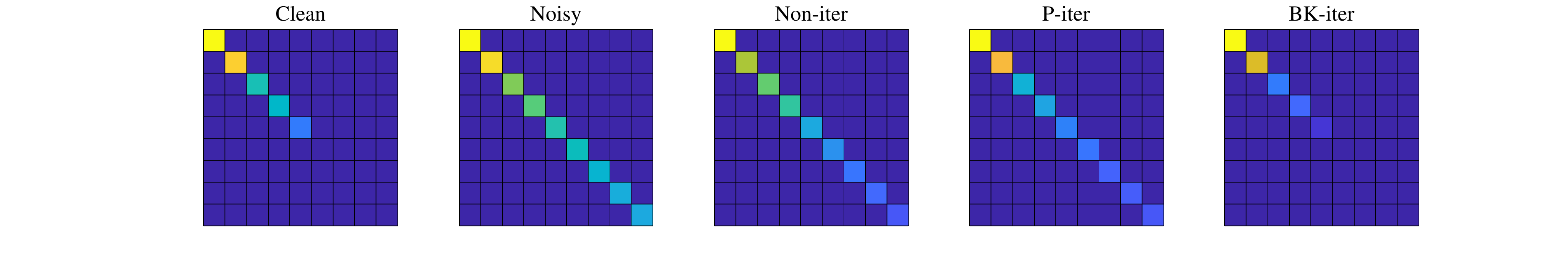}
  }
  \caption{Comparison of non-iterative (Non-iter) method, power iteration (P-iter) and randomized block Krylov iteration (BK-iter) in removing tail singular values of an [N=3, I=10, R=5] tensor.}
  \label{fig:denoisePc}
  \end{figure}

Besides, since rBKI constructs the subspace through a lower $q$-degree polynomials (\ref{eq:BKspace}), it requires only $q = \varTheta (\text{log} d / \sqrt{\epsilon} )$ iterations to achieve the quasi-optimal approximation \footnote{ In Eq. \ref{eq:F_errb}, the approximation that satisfies error ($1+\epsilon$) is called quasi-optimal approximation.} to an $n \times d$ matrix, which is more efficient than the power iteration that requires $q = \varTheta (\text{log} d / \epsilon )$ iterations (Theorem 1 in \cite{rBKI2015}).
Empirically, $q = 2 $ is sufficient to achieve satisfactory approximate accuracy.
Particularly, when the data have a rapid singular value decay, rBKI can offer faster convergence performance, which is validated in the simulation section.
A detailed description of tensor sketching based on rBKI is presented in Algorithm\ref{alg:RBKI}. 

\begin{algorithm}[htbp]
  \caption{Tensor sketching based on rBKI}
  \label{alg:RBKI}
  \begin{algorithmic}[1]
   \REQUIRE $\tenmat{X}\in\Real^{I_n\times \prod_{p\neq n}I_p }$, the mode-n sketch-size $S_n$, and error $\epsilon \in (0,1)$.
   \ENSURE $[\matn{Q}, \sim] = \text{qr} (\matn{K}$, 0). \# economical QR. 
  \STATE Let $q = \Theta (\frac{log \prod_{p\neq n}I_p }{\sqrt{\epsilon}} )$, $\matn{\Omega}\in\Real^{\prod_{p\neq n}I_p\times S_n}$ is a randomized sketching matrix.
  \STATE Draw a sketch $\matn{W} = \tenmat{X}\matn{\Omega}$.
  \STATE Construct the block Krylov subspace $\matn{K} = [\matn{W}, (\tenmat{X}\tenmat{X}{}^{\top})\matn{W},\cdots, (\tenmat{X}\tenmat{X}{}^{\top})^q\matn{W}]$.
  \end{algorithmic}
  \end{algorithm} 

Moreover, the time complexity of several typical compared methods was presented in Table \ref{tab:complexity} for comparison.
Given a matrix $\mathbf{X} \in \Real^{n \times d}$ with sketch-size $s$, according to the cost of each stage, the deterministic SVD gave a runtime of $\mathcal{O}(n^2d)$, while the GR-SVD gave that of $\mathcal{O} (nds)$.
Since the number of iteration for rBKI and power iteration method are $q = \varTheta (\text{log} d / \sqrt{\epsilon} )$ and $q = \varTheta (\text{log} d / \epsilon )$, the corresponding runtime are $\mathcal{O}(nds\log d / \sqrt{\epsilon}  + ns^2 \log^2 d/ \epsilon + s^3 \log^3 d/ \epsilon^{\frac{3}{2}})$ and $\mathcal{O}(nds\log d / \epsilon + ns^2 \log d/ \epsilon)$, respectively.

\begin{table}[htb!] \scriptsize
  \renewcommand\arraystretch{1.2} 
  \caption{Comparison of time complexity between the deterministic SVD and randomized SVD based on RRF (GR-SVD), power iteration (GRpi-SVD) and rBKI (rBKI-SVD), on an $(n \times d)$ matrix with sketch-size $s$.}
  \label{tab:complexity}
  \centering  
  \begin{tabular}{ l | c c c c }
\hline \hline	
Methods  & SVD & GR-SVD &GRpi-SVD &rBKI-SVD \\
\cline{1-5}
sketching $\mathbf{W}$    & /      & $\mathcal{O} (nds)$ & $\mathcal{O} (nds)$        & $\mathcal{O} (nds)$        \\
constructing $\mathbf{K}$   & /      & /     & $\mathcal{O} (ns^2q+ndsq)$ & $\mathcal{O} (ns^2q+ndsq)$ \\
finding $\mathbf{Q}$    & /      & $\mathcal{O}(ns^2)$              & $\mathcal{O}(ns^2)$                  & $\mathcal{O}(n(sq)^2)$                  \\
computing $\mathbf{Z}$   & /      & $\mathcal{O}(nds)$               & $\mathcal{O}(nds+ns^2)$                 & $\mathcal{O}(nd(sq)+nd(sq)^2)$          \\
factorization & $\mathcal{O}(n^2d)$ & $\mathcal{O}(s^2d)$              & $\mathcal{O}(s^3)$                      & $\mathcal{O}((sq)^3)$                   \\
\hline \hline
  \end{tabular}
\end{table}

In addition, when the input data have a large number of columns, the time complexity of constructing a block Krylov subspace can be further reduced by selecting partial columns (e.g., uniform sampling). 
Specifically, for a matrix $\mathbf{X} \in \Real^{n \times d}$ ($d$ is quite large and $ n \ll d$ ), a shorter matrix $\mathbf{Y} \in \Real^{n \times m}$ ($ m = \omega d$, and $\omega$ is a sampling rate) can be used to reduce the time complexity of constructing a block Krylov subspace. 
It is verified that acceptable accuracy can be achieved even with a small sampling rate of $\omega = 0.1$, as detailed in the simulation section (see Fig. \ref{fig:samprate}).

\subsection{Tucker Decomposition Based on rBKI (rBKI-TK)}

Tucker decomposition (TKD) is a natural extension of SVD to higher order tensors, also known as higher order SVD (HOSVD) \cite{HOSVD2000}. 
As HOSVD is time and space consuming for large-scale tensor, this work aims to explore more efficient TKD based on randomized HOSVD by utilizing randomized sketching techniques.

Inspired by the implementation of rBKI algorithm in LRA of matrices, a randomized HOSVD based on rBKI algorithm is proposed for computing TKD, which is detailed in Algorithm \ref{alg:rBKI-TKD}. Specifically, step 2 illustrates how rBKI can be integrated into a randomized HOSVD framework to capture a more important subspace of the input data.
As a result, a fast and accurate TKD can be achieved by the rBKI-based randomized HOSVD.

\begin{algorithm}[htbp]
  \caption{The rBKI-based TKD (rBKI-TK)}
  \label{alg:rBKI-TKD}
  \begin{algorithmic}[1]
   \REQUIRE $\tensor{X}\in\Real^{I_1\times I_2 \cdots \times I_N}$, N-th sketch-sizes $(S_1,S_2,\ldots,S_N)$ and the multilinear rank $k = (R_1,R_2,\ldots,R_N)$. 
   \ENSURE $\tilde{\tensor{X}}_k = \compactTucker{G}{U}$.
  \FOR{$n=1,2,\ldots,N$}
  \STATE Compute the orthonormal basis via Algorithm.\ref{alg:RBKI} $\matn{Q} = \text{rBKI}(\tenmat{X}, S_n, \epsilon)$.
  \STATE Project the data onto the sketch subspace $\matn{Z} = \matn{Q}{}^{\top}\tenmat{X}\tenmat{X}{}^{\top}\matn{Q}$.
  \STATE Compute [$\matn{\tilde{U}},\sim,\sim] = \text{svd} (\matn{Z})$.  \# economical SVD
  \STATE Compute $\matn{U} = \matn{Q} \matn{\tilde{U}}(:, 1:R_n)$.
  \STATE Compute the sketched core tensor $\tensor{G}\from\tensor{X}\ttmn{U}{}^{\top}$.
  \ENDFOR
  \end{algorithmic}
  \end{algorithm}

\subsection{TRD with Prior rBKI-TK Compression}

With a well-designed structure, tensor ring decomposition (TRD) \cite{TRD2016} usually served as an efficient tensor compression model for large-scale data analysis.
While TRD can significantly reduce dimensionality, it is still inefficient when directly applied to large-scale data analysis. 
Accordingly, it is necessary to employ an efficient tensor compression prior to TRD to reduce the complexity. 

In this subsection, a hierarchical TRD model with prior rBKI-TK compression is proposed for large-scale data compression, which is detailed in Algorithm \ref{alg:rBKI-TK-TR}.  
Note that, such hierarchical framework can be extended to other tensor decompositions, such as CPD and TTD.
For better understanding, a tensor network diagram of TKD, TRD, hierarchical TK-TR decomposition and contraction operations among them are shown in Fig.  \ref{fig:TK-TR}.

\begin{algorithm}[htbp]
  \caption{TRD with prior rBKI-TK compression (rBKI-TK-TR) }
  \label{alg:rBKI-TK-TR}
  \begin{algorithmic}[1]
   \REQUIRE $\tensor{X}\in\Real^{I_1\times I_2 \cdots \times I_N}$, compress-sizes $(S_1,S_2,\ldots,S_N)$ and TR-rank $k = (R_1,R_2,\ldots,R_N)$. 
   \ENSURE $\tilde{\tensor{X}}_k = \mathfrak{R} \tenfactors{ \tensor{X}_1,\tensor{X}_2, \cdots, \tensor{X}_N }$.
  \STATE The prior rBKI-TK compression $\compactTucker{G}{U} = \text{rBKI-TK} (\tensor{X}, S_n, n = 1, 2, \cdots,N)$.
  \FOR{$n=1,2,\ldots,N$}
  \STATE Compute $\tensor{G}_n = \text{TR-ALS}(\tensor{G},R_n) \in \Real^{R_{n-1} \times S_n \times R_n}$. \# or using TR-SVD \cite{TRD2016}.
  \STATE Contraction operation $\tensor{X}_n = \tensor{G}_n \times_n \matn{U} \in \Real^{R_{n-1} \times I_n \times R_n}$.
  \ENDFOR
  \end{algorithmic}
  \end{algorithm} 

\begin{figure*}[htb!]
\centerline{
    \includegraphics[width=1\linewidth,height=0.5\linewidth]{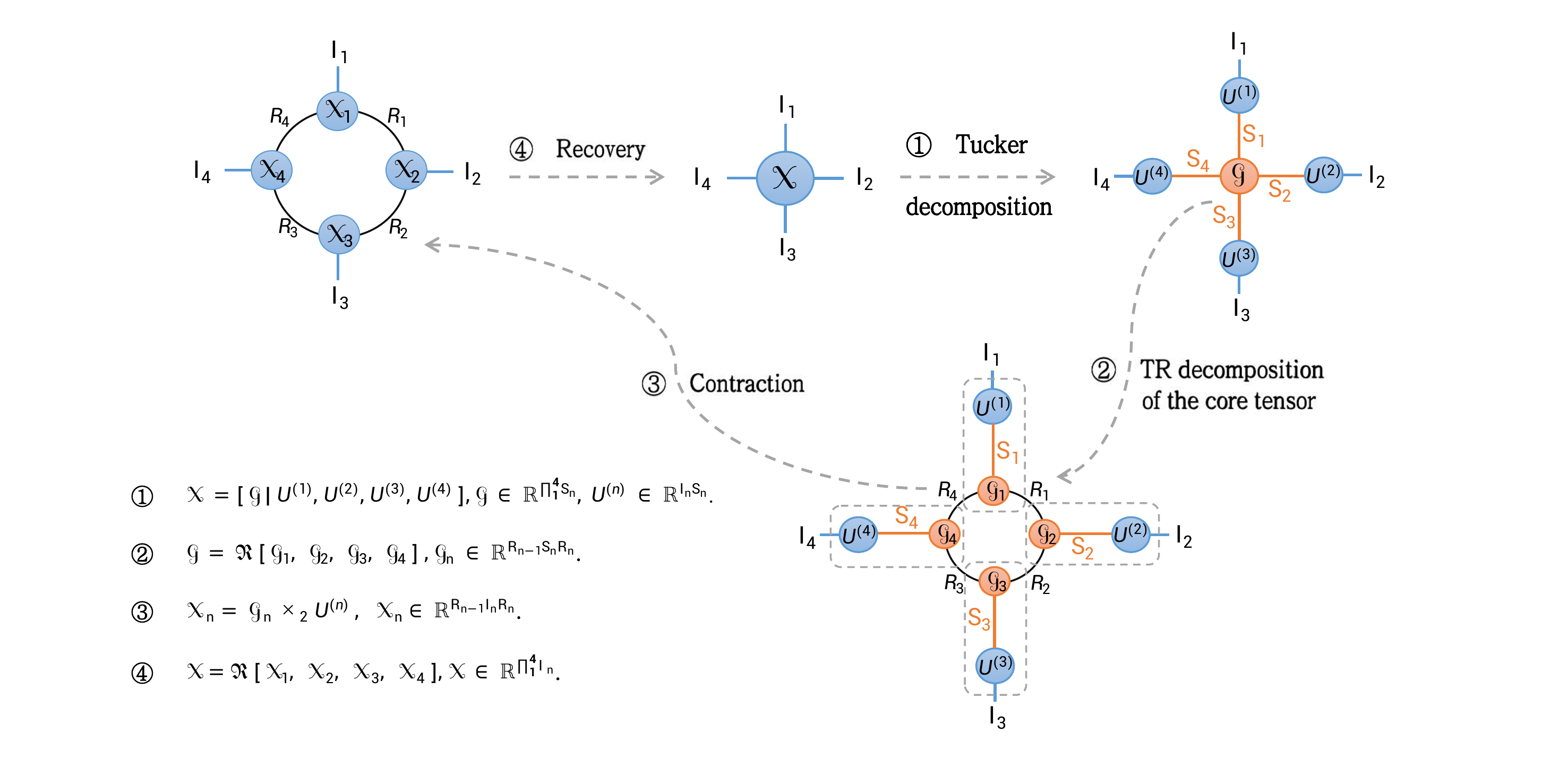}
}
\caption{A tensor network diagram of TKD, TRD, hierarchical TK-TR decomposition and contraction operations for a fourth-order tensor.}
\label{fig:TK-TR}
\end{figure*}

\section{Simulations and Analysis}
\label{sec:Sim-Dis}

In this section, efficient denoising of noisy tensors were considered. 
The proposed method was compared with a number of randomized sketching methods.
All the simulations were performed on a computer with Intel Core i5 CPU (3 GHz) and 8 GB memory, running 64-bit Windows 10, in 64-bit MATLAB2020r.
Notably, all results were the average of 10 Monte Carlo runs.
For all the methods, the number of maximum iterations was set to 20, and a stopping criteria was set by $|\text{Fit}(\tensor{X}, \tilde{\tensor{X}}_k^t) - \text{Fit}(\tensor{X}, \tilde{\tensor{X}}_k^{t-1})| \le 10^{-3}$, where $\tilde{\tensor{X}}_k^t$ is a rank-k approximation of $\tensor{X}$ after $t$-th iteration.
Running time in seconds was used for efficiency validation and the following evaluation metrics were employed to measure the accuracy of approximation and denoising. 

$\bullet$ Fit(\%) $ = (1-\|\tensor{X}-\tilde{\tensor{X}}_k \|_F / \|\tensor{X} \|_F) \times 100\%$, Fitting rate, where $\tensor{X}$ and $\tilde{\tensor{X}}_k$ is the original tensor and the sketched rank-$k$ approximation, respectively. A larger value indicates higher recovery accuracy.

$\bullet$ RErr $ = \| \tensor{X}-\tilde{\tensor{X}}_k \|_F / \|\tensor{X} - \tensor{X}_k\|_F$, Relative Error, where $\tensor{X}_K$ is the optimal rank-$k$ approximation obtained by truncated HOSVD. ERrr = 1 means that $\tilde{\tensor{X}}_k$ approximates as well as $\tensor{X}_k$, with larger ratios indicates worse results.

$\bullet$ PSNR(dB) $= 10 \log_{10} (255^2 / \text{MSE})$, Peak-Signal-Noise-Ratio, where MSE is the Mean-Square-Error denoted by $\| \tensor{X} - \tilde{\tensor{X}}_k \|_F / \text{num}(\tensor{X})$, and $\text{num}(\cdot)$ denotes the number of components. A larger value indicates better denoising performance.

\subsection{Simulations on rBKI-TK}
In this subsection, the proposed rBKI-TK model was compared with a number of randomized methods. 
Since the proposed rBKI sketches in an iterative strategy, several iterative sketching techniques were chosen for comparison, including Gaussian randomized HOSVD models based on power iterations (GRpi-SVD) \cite{piSVD2021}, one RRF-initialization+two-ALS-like iterations (GR3i-SVD) \cite{3iSVD2021}, two-ALS-like iterations (GR2i-SVD) \cite{2iQR2020} and rank revealing algorithm (GRrr-SVD) \cite{rrQR2021}.
Besides, non-iterative sketching methods were also employed to demonstrate the superiority of the iterative series over the non-iterative series, including existing randomized HOSVD models based on Gaussian random range finder (GR-SVD) \cite{2iQR2020},  leverage score sampling (LS-SVD) \cite{lsSVD2022}, as well as randomized HOSVD models we generated based on CountSketch sampling (CS-SVD), importance sampling (IS-SVD), and subsampled randomized Hadamard transform (SRHT-SVD). 
Moreover, the truncated HOSVD (T-SVD) was adopted as a benchmark method for approximate quality assessment. 

Typical types of synthetic data and real-world video data were used to evaluate the approximation and denoising performance of the compared methods.
Besides, in all simulations on rBKI-TK, the sketch size was set by $S_n = R_n+1/\gamma$, where the overlapping parameter $\gamma$ was set to 0.2, as  suggested in \cite{randAlgs2011}.
For consistency and fairness, the sampling rate $\omega$ is set to 1 for the proposed method in all simulations unless specified.

\subsubsection{Synthetic Data}
The generation of synthetic tensor data followed $\tensor{X}+\lambda \tensor{N}$, where the original data \tensor{X} was corrupted by a Gaussian random noise term $\lambda \tensor{N}$, and $\lambda$ was a parameter for tuning the noise level of  $\tensor{N}$ via Signal-to-Noise-Ratio (SNR)\footnote{SNR = 0dB means volume(Data) = volume(Noise), SNR $<$ 0dB means volume(Data) $<$ volume(Noise), and vice versa.}
\begin{equation}
  \label{eq:SNR}
\text{SNR(dB)} = 20 \text{log}_{10} (\|\tensor{X}\|_F / \|\lambda \tensor{N}\|_F).
\end{equation}
The tensor \tensor{X} was generated in two ways: Gaussian random and power functional. Compared to the Gaussian random data, the power functional data often has a tail of smaller singular values (i.e., a rapid singular value decay), which is prone to good low-rank approximation results. 
For better understanding, a heatmap of the singular value matrix of Gaussian random data and power functional data were shown in Fig. \ref{fig:heatmap}. 
As shown, in the Gaussian random case, $\sigma_1(\mathbf{X}_{(1)}) / \sigma_2(\mathbf{X}_{(1)}) \approx 1.6$, while in the power functional case, $\sigma_1(\mathbf{X}_{(1)}) / \sigma_2(\mathbf{X}_{(1)}) \approx 5$.

$\bullet$ {\bf Gaussian random data.} In this case, a third-order Tucker tensor $\tensor{X}$ was given by \eqref{eq:TKD}, where the entries of latent factor matrices $\matn{U}\in\Real^{I\times R}, n=1,2,3$ and core tensor $\tensor{G}\in\Real^{R_1\times R_2\cdots\times R_N}$ were drawn from $\mathcal{N}(0,1)$.   
To measure the performance of denoising, the additional Gaussian noise tensor $\tensor{N}$ was generated with different SNR = [-10,-5,5] dB. 
The experimental results were presented in Table \ref{tab:noisydata}.

$\bullet$ {\bf Power Functional data.}
In this case, a third-order tensor $\tensor{X}$ was given by
\begin{equation}
  \label{eq:function}
 x_{i,j,k} = \frac{1}{\sqrt[p]{i^p + j^p +k^p}}, 
\end{equation}
where $p$ is a power parameter controlling the singular values of $\tensor{X}$ (p=10 in this simulation). 
To evaluate the scalability of rBKI-TK, $\tensor{X}$ was generate in different size and corrupted by the same level (SNR = 5 dB) Gaussian noise.
The experimental results were presented in Table \ref{tab:fundata}.

\begin{figure}[htb!]
  \centerline{
    \subfigure[Gaussian random data]{
      \includegraphics[width=0.22\textwidth,height=0.20\textwidth]{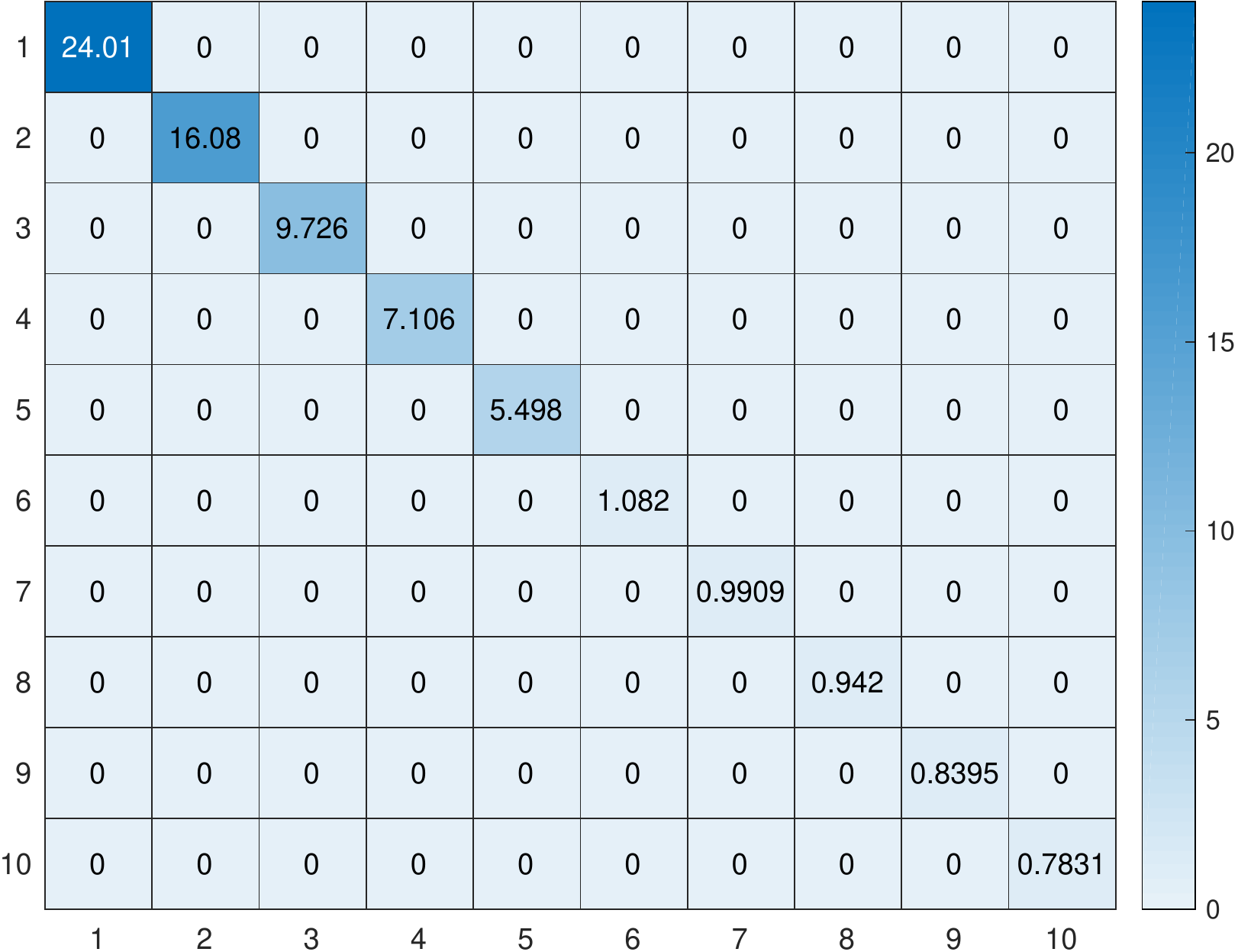}}\hfil
      \subfigure[Power functional data]{
      \includegraphics[width=0.22\textwidth,height=0.20\textwidth]{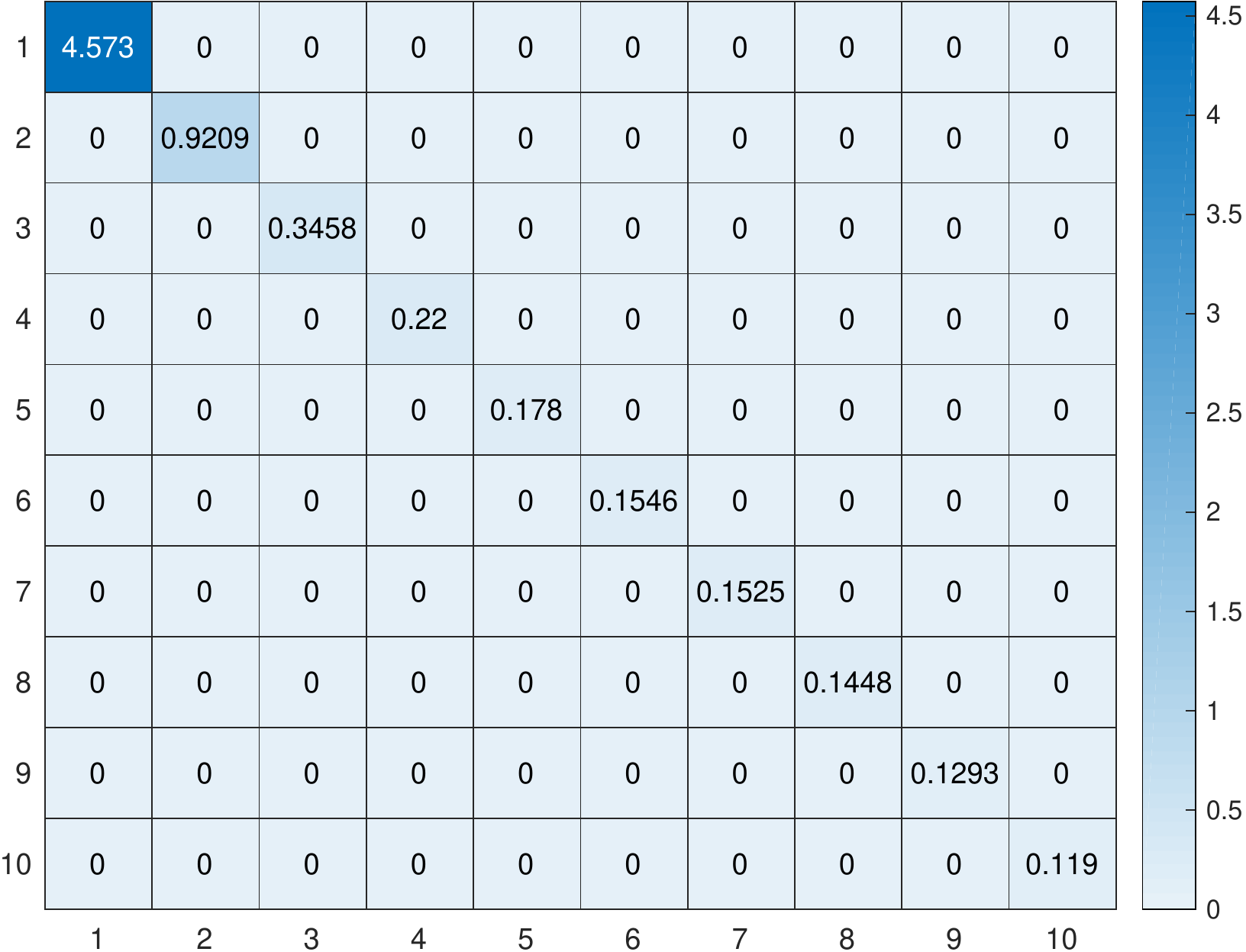}}
      }
  \caption{For an (N=3, I=10, R=5, SNR=20dB) tensor $\tensor{X}$ in (a): Gaussian random format and in (b): power functional format with rapid singular value decay, a heatmap of the singular value matrix of the mode-1 matricization.}
  \label{fig:heatmap}
  \end{figure}

  \begin{table*}[!t] \scriptsize
  \renewcommand\arraystretch{1.2} 
  \caption{Comparison in terms of RErr, Fit(\%) and Time(s) of variant methods on an [N=3, I=200, R=10] Gaussian random tensor with different noise ratio settings, SNR = [-10,-5,5] dB. Compared to the deterministic T-SVD, our method achieve the best results among the randomized methods, which are shown in bold. All results are the average of 10 Monte Carlo runs.}
  \label{tab:noisydata}
  \centering  
  \begin{tabular}{ l || c c c | c c c | c c c}
\hline \hline
\multirow{2}{*}{Algorithm}
& \multicolumn{3}{c|}{$SNR = -10 dB$}   & \multicolumn{3}{c|}{$SNR = -5 dB$} & \multicolumn{3}{c}{$SNR = 5 dB$}   \\
\cline{2-10}
& RErr & Fit & Time    & RErr & Fit & Time  & RErr & Fit & Time \\ 
\cline{1-10} 	 	 	 	 	 	 	 	 	 	 	 	 	 	 	 	 
GR-SVD               & 8.92          & 0.33           & 1.62          & 17.69         & 2.03           & 4.17          & 46.75         & 22.75          & 8.89          \\
CS-SVD               & 8.92          & 0.31           & 1.64          & 17.68         & 2.11           & 5.87          & 47.07         & 22.21          & 11.09         \\
LS-SVD               & 8.92          & 0.30           & 6.44          & 17.68         & 2.07           & 14.48         & 47.80         & 21.00          & 33.74         \\
IS-SVD               & 8.91          & 0.36           & 2.16          & 17.59         & 2.56           & 4.81          & 43.71         & 27.77          & 8.69          \\
SRHT-SVD             & 8.92          & 0.28           & 70.87         & 17.68         & 2.08           & 188.01        & 46.43         & 23.27          & 447.71        \\

\hline
GRpi-SVD             & 5.25          & 41.33          & 13.97         & 2.72          & 84.94          & 12.54         & \textbf{1.00}          & 98.34          & 1.57          \\
GR3i-SVD             & 8.80          & 1.64           & 10.94         & 15.74         & 12.83          & 39.12         & 21.39         & 64.65          & 33.97         \\
GR2i-SVD              & 5.73          & 35.99          & 11.74         & 8.97          & 50.31          & 11.74         & 16.18         & 73.27          & 10.65         \\
GRrr-SVD              & 6.53          & 26.97          & 27.34         & 5.84          & 67.67          & 23.84         & 1.04          & 98.27          & 5.67          \\
\textbf{rBKI (ours)} & \textbf{1.01} & \textbf{88.73} & \textbf{2.00} & \textbf{1.00} & \textbf{94.46} & \textbf{1.68} & \textbf{1.00} & \textbf{98.35} & \textbf{1.47} \\

\hline
T-SVD                & 1.00          & 88.82          & 5.14          & 1.00          & 94.46          & 4.30          & 1.00          & 98.35          & 3.74    \\
\hline \hline
  \end{tabular}
\end{table*}

  \begin{table*}[htbp] \scriptsize
    \renewcommand\arraystretch{1.2} 
    \caption{Comparison in terms of RErr, Fit(\%) and Time(s) of variant methods on power functional data with fast singular value decay at different scales. In all cases, the level of Gaussian random noise is SNR=5dB. Compared to the deterministic T-SVD, our method achieve the best results among the randomized methods, which are shown in bold. All results are the average of 10 Monte Carlo runs.} 
    \label{tab:fundata}
    \centering  
    \begin{tabular}{ l || c c c | c c c | c c c}
    \hline \hline
    \multirow{2}{*}{Algorithm}
    & \multicolumn{3}{c|}{$N=3,\;I=200,\; R=10$}   & \multicolumn{3}{c|}{$N=3,\;I=500,\; R=25$} & \multicolumn{3}{c}{$N=4,\; I=30,\; R=3$}   \\
    \cline{2-10}
    & RErr & Fit & Time  & RErr & Fit & Time  & RErr & Fit & Time \\ 
    \cline{1-10}     	 	 	 	 	 	 	 	 	 	 	 	 	 	 	 	 
    GR-SVD               & 12.73         & 62.91          & 5.42          & 16.89         & 71.52          & 151.82         & 6.92          & 50.14          & 0.93          \\
  CS-SVD               & 12.64         & 63.20          & 5.55          & 16.77         & 71.72          & 138.88         & 7.58          & 45.34          & 1.17          \\
  LS-SVD               & 13.98         & 59.28          & 21.41         & 19.04         & 67.88          & 759.37         & 11.52         & 16.93          & 1.90          \\
  IS-SVD               & 10.78         & 68.60          & 3.86          & 14.63         & 75.33          & 156.48         & 3.54          & 74.48          & 0.75          \\
  SRHT-SVD             & 12.78         & 62.78          & 105.05        & 16.76         & 71.73          & 3320.75        & 7.50          & 45.93          & 10.20         \\
\hline
  GRpi-SVD             & 2.12          & 93.83          & 1.97          & 2.61          & 95.60          & 59.75          & 1.54          & 88.91          & 0.83          \\
  GR3i-SVD             & 20.58         & 40.05          & 28.03         & 18.57         & 68.68          & 683.73         & 13.85         & 0.18           & 0.56          \\
  GR2i-SVD              & 2.19          & 93.62          & 3.11          & 2.24          & 96.21          & 59.92          & 2.11          & 84.77          & 0.81          \\
  GRrr-SVD              & 2.53          & 92.63          & 9.14          & 2.94          & 95.05          & 105.42         & 1.89          & 86.34          & 3.58          \\
  \textbf{rBKI (ours)} & \textbf{1.10} & \textbf{96.80} & \textbf{0.85} & \textbf{1.10} & \textbf{98.14} & \textbf{43.51} & \textbf{1.00} & \textbf{92.79} & \textbf{0.11} \\
\hline
  T-SVD                & 1.00          & 97.09          & 2.48          & 1.00          & 98.31          & 84.73          & 1.00          & 92.79          & 0.17          \\
\hline \hline
    \end{tabular}
  \end{table*}

\begin{figure}[htb!]
  \centerline{
    \subfigure[Gaussian random data]{
      \includegraphics[width=0.25\textwidth,height=0.22\textwidth]{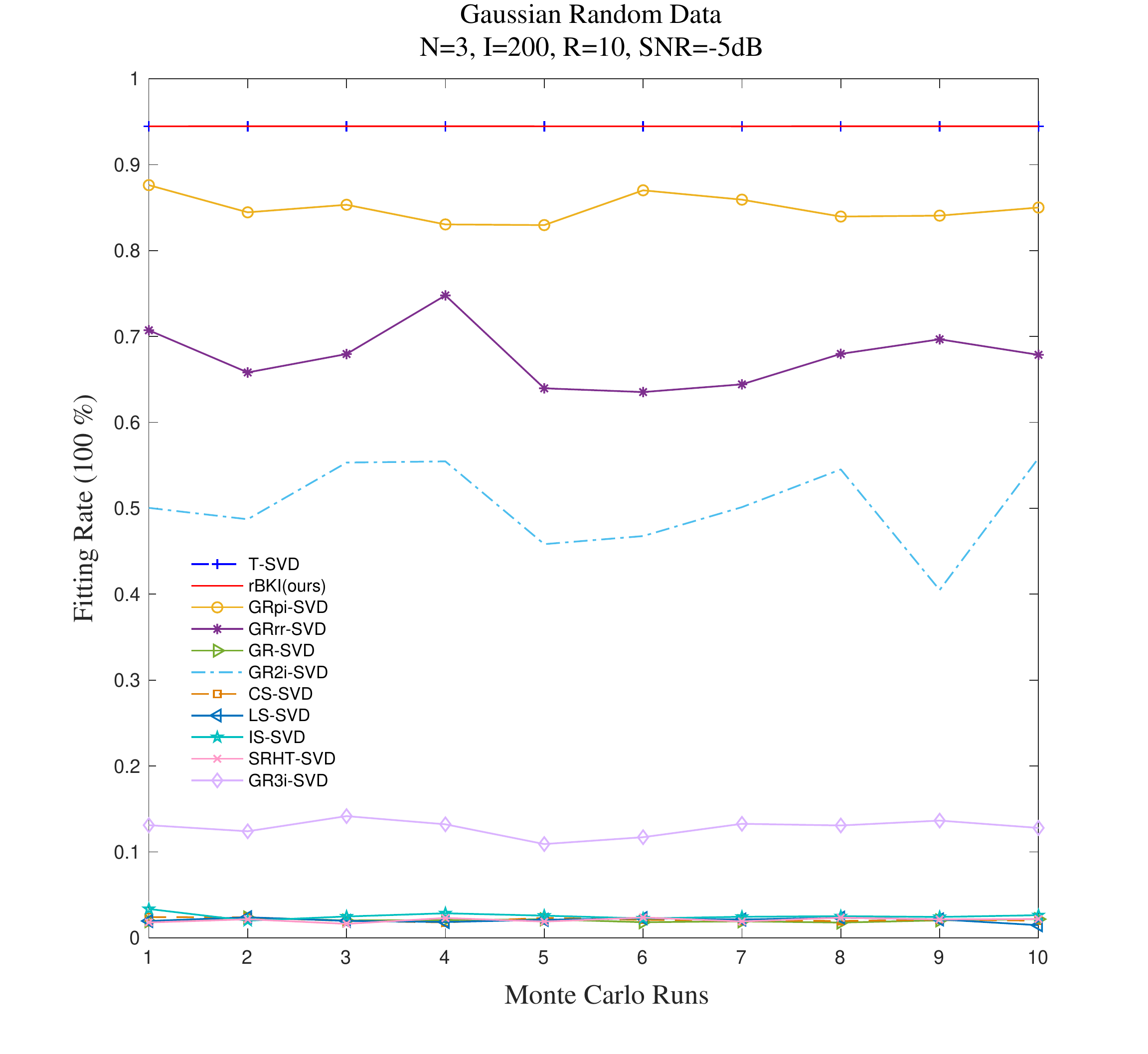}}\hfil
      \subfigure[Power functional data]{
      \includegraphics[width=0.25\textwidth,height=0.22\textwidth]{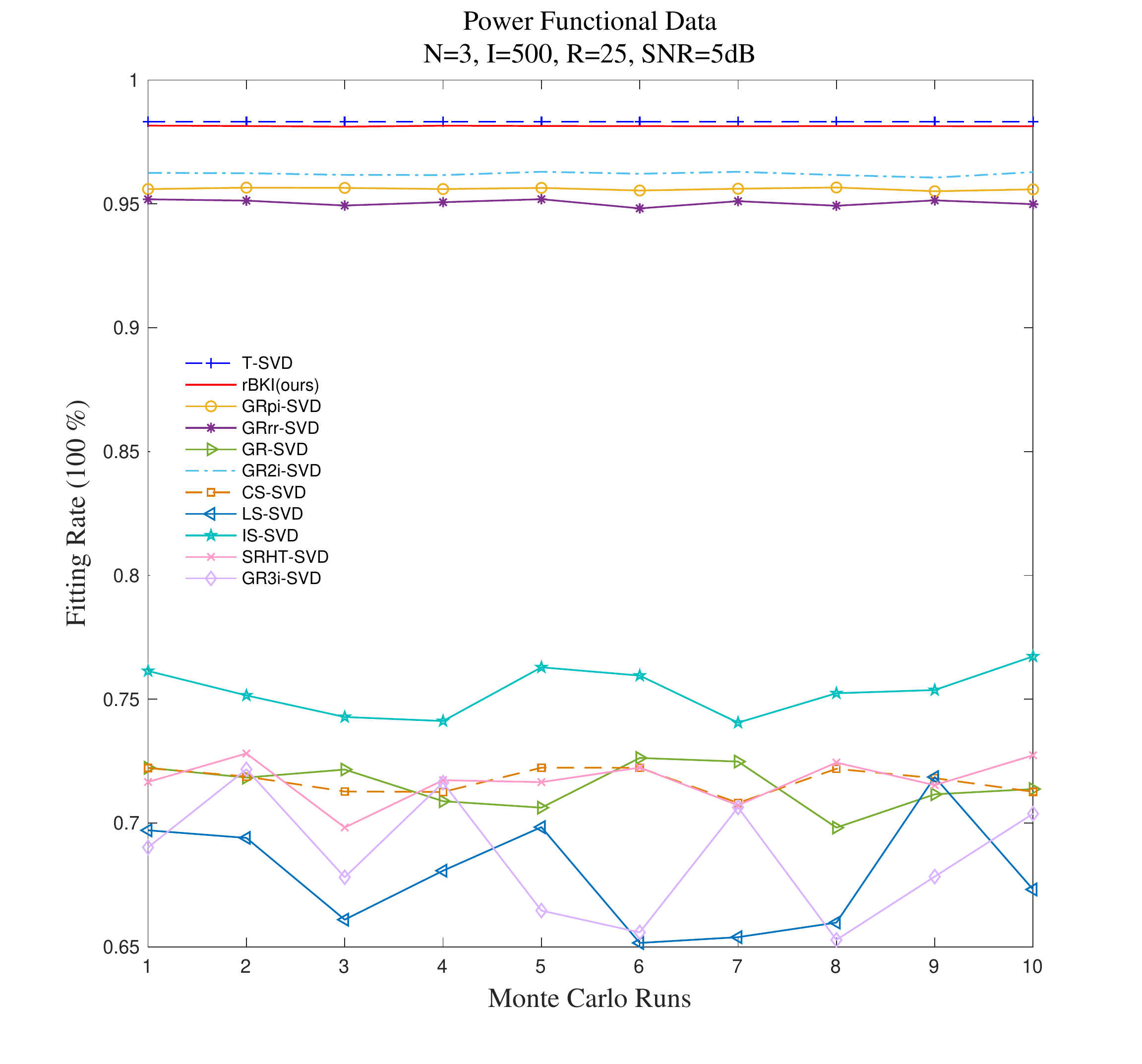}}
      }
  \caption{Comparison of variant methods in terms of fitting rate over 10 Monte Carlo runs on (a): an (N=3, I=200, R=10, SNR=-5dB) Gaussian random tensor and (b): an (N=3, I=500, R=25, SNR=5dB) power functional tensor.}
  \label{fig:Fitcurve}
\end{figure}

\begin{figure}[htb!]
  \centerline{
      \includegraphics[width=0.75\linewidth,height=0.6\linewidth]{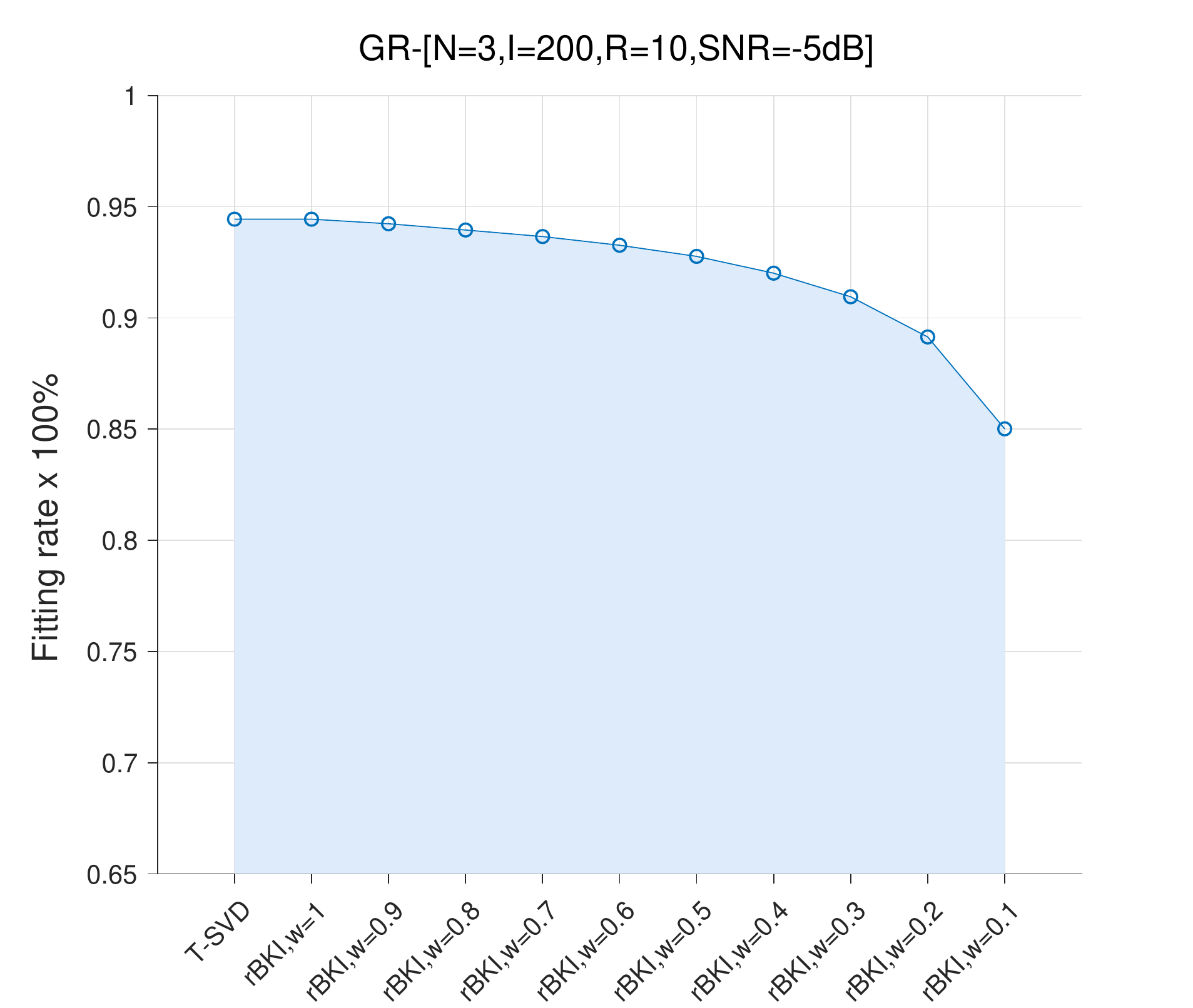}
  }
  \caption{Comparison of T-SVD and the proposed rBKI-TK method with different sampling rate $\omega = [0.1:0.1:1]$ in terms of fitting rate on an (N=3, I=200, R=10, SNR=-5dB) Gaussian random tensor.}
  \label{fig:samprate}
  \end{figure}

As shown, in general, our method performed consistently and optimally in terms of RErr and Fit in most cases, and was completely faster than the other methods.
Specifically, for results of the Gaussian random noisy data (in Table \ref{tab:noisydata}), as the noise ratio increased from SNR = 5dB to SNR = -10dB, non-iterative methods and other compared iterative methods were severely invalidated by noise interference and the Fit dropped rapidly, while our method remained valid and the Fit stayed above 85\%. 
Particularly, as the noise interference decreased, the Fit of some methods increased while the RErr became larger, indicating that these methods did not accurately capture the important information corresponding to large singular values. 
By using these two metrics, the validity of the compared methods can be measured more accurately.

Besides, for results of the power functional noisy data (in Table \ref{tab:fundata}), in the third-order tensor case, as the dimensionality increased from 200 to 500, our method achieved the best results comparable to T-SVD and is more time efficient, while other methods performed mediocre.
However, in the fourth-order tensor case, the Fit of certain methods (e.g., LS-SVD, GR3i-SVD)
decreased rapidly and exhibited poor scalability.

Furthermore, comparison in terms of Fit of compared methods over 10 Monte Carlo runs was shown in Fig. \ref{fig:Fitcurve}. 
Iterative methods (except GR3i-SVD) achieved consistent and excellent performances, while non-iterative methods performed poorly and inconsistently. Our method achieved the best result comparable to T-SVD.
Moreover, for an (N=3, I=200, R=10, SNR=-5dB) Gaussian random tensor, given partial data (even with a sampling rate $\omega=0.1$), our method also had good potential to capture an accurate subspace with a Fit of $85\%$, see Fig. \ref{fig:samprate}.

\subsubsection{Real-world Data}

In this simulation, three YUV video sequences\footnote{http://trace.eas.asu.edu/yuv/index.html} were converted to fourth order tensors (pixels $\times$ pixels $\times$ RGB $\times$ frames) and used for denoising evaluations, including ``flower-cif'' video sequences with a size of $288 \times 352 \times 3 \times 250 $, ``news-cif'' and ``hall-cif'' video sequences with a same size of $288 \times 352 \times 3 \times 300 $. 
To evaluate the performance of denoising, these video sequences were corrupted with different level of Gaussian random (GR) noise: a less noisy case of ``flower-cif'' video was corrupted by SNR = 10 dB GR-noise, and a noisier case of ``news-cif'' and ``hall-cif'' video were corrupted by SNR = 5dB GR-noise.
Besides, the denoising evaluation of ``flower-cif'' video was carried out on two principles: the ``one-pass'' processed a complete set (250 frames) directly, while the ``divide and conquer'' processed five subsets (50 frames) sequentially.
Moreover, with different rank setting, the full ``news-cif'' and ``hall-cif'' video were decomposed with a rank of $85 \times 90 \times 3 \times 30$, while the full and subsets of ``flower-cif'' video were decomposed with a rank of $85 \times 90 \times 3 \times 50$ and $85 \times 90 \times 3 \times 10$, respectively.

For comparison, the benchmark T-SVD and six competitive methods selected from the above compared methods, i.e., three iterative methods: GRpi-SVD, GRrr-SVD, GR2i-SVD, and three non-iterative methods: GR-SVD, CS-SVD, IS-SVD, were adopted in simulations.
The average results in terms of PSNR (dB), Fit (\%) and Time (s) were given in Table \ref{tab:denoising}.
In addition, a visual comparison of denoising performance was shown in Fig. \ref{fig:denoising}, where the 5th-frame of ``flower'' video, the 123th-frame of ``news'' video and the 120th-frame of ``hall'' video were selected for demonstration.

\begin{table*}[!t] \scriptsize
  \renewcommand\arraystretch{1.2} 
  \caption{Comparison of the denoising performance in terms of PSNR(dB), Fit(\%) and Time(s) of variant methods on three YUV video data. Compared to the deterministic T-SVD, our method achieve the best results among the randomized methods, which are shown in bold. All results are the average of 10 Monte Carlo runs.}
  \label{tab:denoising}
  \centering  
  \begin{tabular}{ l || c c c | c c c | c c c | c c c }
    \hline
    \multirow{2}{*}{Algorithm}
    & \multicolumn{3}{c|}{news 5dB (one-pass)}   & \multicolumn{3}{c|}{hall 5dB (one-pass, $\omega$=0.5)}  & \multicolumn{3}{c|}{flower 10dB (one-pass)		}   & \multicolumn{3}{c}{flower 10dB (divide-conquer)}\\
    \cline{2-13}
    & PSNR & Fit & Time  & PSNR & Fit & Time  & PSNR & Fit & Time  & PSNR & Fit & Time\\ 
    \cline{1-13} 	 	 	 	 	 	 	 	 	 	 	 	 	 	 	 	 
    GR-SVD               & 20.34          & 68.25          & 127.02         & 20.10          & 76.92          & 63.44          & 18.01          & 78.11          & 37.15          & 18.44          & 80.08          & 97.39          \\
    CS-SVD               & 20.30          & 68.24          & 130.72         & 20.32          & 77.15          & 75.49          & 18.05          & 78.15          & 29.18          & 18.48          & 80.21          & 100.21         \\
    IS-SVD               & 20.58          & 69.69          & 86.95          & 20.47          & 77.81          & 36.89          & 18.02          & 78.28          & 34.70          & 18.45          & 80.49          & 52.02          \\
  \hline
    GRpi-SVD             & 25.22          & 83.27          & 32.46          & 25.12          & 87.82          & 25.66          & 19.92          & 83.94          & 27.24          & 20.64          & 85.89          & 28.44          \\
    GR2i-QR              & 23.83          & 80.33          & 35.18          & 24.46          & 86.70          & 21.62          & 19.24          & 81.76          & 18.19          & 19.63          & 83.83          & 34.41          \\
    GRrr-QR              & 24.88          & 82.57          & 90.06          & 25.32          & 88.12          & 55.92          & 19.83          & 83.63          & 54.17          & 20.39          & 85.46          & 62.21          \\
    \textbf{rBKI (ours)} & \textbf{28.69} & \textbf{88.97} & \textbf{18.50} & \textbf{28.27} & \textbf{91.54} & \textbf{16.30} & \textbf{20.77} & \textbf{85.95} & \textbf{14.81} & \textbf{21.56} & \textbf{87.56} & \textbf{14.30} \\
  \hline
    T-SVD                & 28.73          & 89.03          & 51.85          & 28.50          & 91.81          & 38.26          & 20.80          & 85.98          & 40.79          & 21.57          & 87.58          & 30.27  \\
  \hline \hline
    \end{tabular}
\end{table*}

\begin{figure*}[!t]
  \centerline{
    \includegraphics[width=1\linewidth,height=0.35\linewidth]{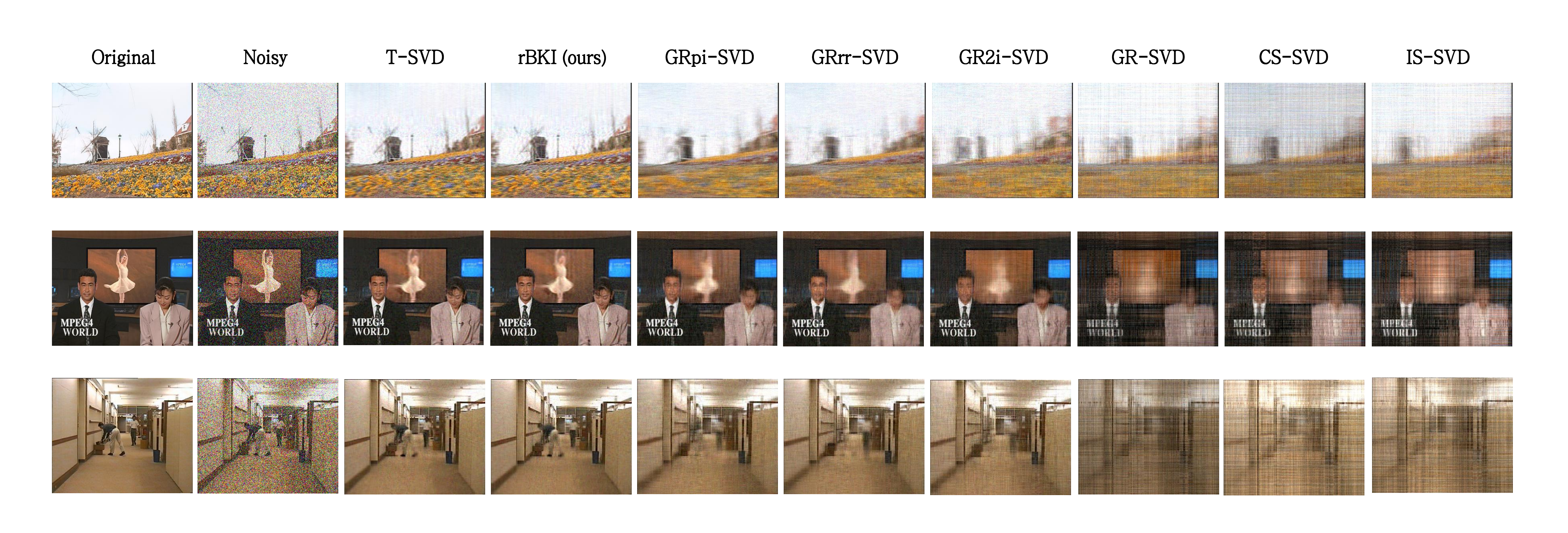}
  }
  \caption{Examples of video denoising, including original data, noisy data and denoised data. The 5th-frame of ``flower'' video (top), the 123th-frame of ``news'' video (middle) and the 120th-frame of ``hall'' video (bottom) are chosen for demonstration.}
  \label{fig:denoising}
  \end{figure*}

  As shown in Table \ref{tab:denoising}, the ``divide-conquer'' principle slightly outperformed the ``one-pass'' principle in terms of efficiency and accuracy for the same rank-to-size ratio (R/I), probably due to the stronger low-rankness of adjacent video frames.
  Thus, for specific video denoising tasks, the divide-and-conquer principle can be considered to find a better trade-off between time and accuracy.
  Specifically, in the less noisy case of ``flower'' video (SNR=10dB), the denoising performance of our method (rBKI) was comparable to T-SVD in that it provided clearer details of flowers and buildings, while the results of other methods are blurred.
  Besides, when denoising the noisier ``news'' and ``hall'' video (SNR=5dB): for the ``news'' video, iterative methods produced a clear foreground (i.e. the letters "MPEG4 WORLD") compared to non-iterative methods, and our method (rBKI) further produced a clearer background (i.e. the ballet dancer) and the expression of presenters; for the ``hall'' video, T-SVD and our method (with a sampling rate $\omega$ of 0.5) can effectively denoise and accurately recover moving objects in the hall, while other methods failed.
  
  \subsection{Simulations on rBKI-TK-TR}  
  In this subsection, two large-scale image datasets were used to evaluate the performance of compared methods on data compression and recovery. 
  One is the COIL-100 dataset of size $32 \times 32 \times 3 \times 7200 $, another is a subset of CIFAR-10 dataset of size  $32 \times 32 \times 3 \times 10000 $, and both datasets contain a number of elements at the $10^7$ level. 
  The proposed rBKI-TK-TR model were compared with two existing hierarchical TR models based on randomized sketching techniques, i.e., Gaunssian random TK-TR model (GR-TK-TR) \cite{Yuan2019TR} and rank revealing TK-TR model (RR-TK-TR) \cite{rrQR2021}.
  
  To evaluate the compression efficiency and recovery accuracy of comparison models, the TR-rank was set ranged from R= [3,3,3,3], R=[6,6,3,6] to R=[9,9,3,9], while the prior Tucker compress-size was set to $\min (5R,I_n)$.
  Note that in order to minimize information loss without losing efficiency, the multilinear rank, sketch-size and compress-size were set to the same value in the prior TKD. 
  All results in Table \ref{tab:realdata} were the average of 10 Monte Carlo runs. 
  Moreover, Fig. \ref{fig:TR-Fit} and Fig. \ref{fig:TR-Time} provided the comparison in terms of fitting rate and running time over 10 Monte Carlo runs, respectively.

  \begin{table*}[!t] \scriptsize
  \renewcommand\arraystretch{1.2} 
  \caption{Comparison in terms of Fit(\%) and Time (s) of variant hierarchical TR models on two large-scale image datasets. The TR-rank is set ranged from R=[3,3,3,3], R=[6,6,3,6] to R=[9,9,3,9]. Compared to the deterministic TR-ALS, our method achieve the best results among the randomized methods, which are shown in bold. All results are the average of 10 Monte Carlo runs.}
  \label{tab:realdata}
  \centering  
  \begin{tabular}{ l || c c | c c | c c | c c | c c | c c }
    \hline \hline
    \multirow{3}{*}{Algorithm}
    & \multicolumn{6}{c|}{COIL-100}  & \multicolumn{6}{c}{CIFAR-10}\\
    \cline{2-13}
     & \multicolumn{2}{c|}{R = [3, 3, 3, 3]}   & \multicolumn{2}{c|}{R = [6, 6, 3, 6]}   & \multicolumn{2}{c|}{R = [9, 9, 3, 9]}   & \multicolumn{2}{c|}{R = [3, 3, 3, 3]}   & \multicolumn{2}{c|}{R = [6, 6, 3, 6]}   & \multicolumn{2}{c}{R = [9, 9, 3, 9]}\\
    \cline{2-13}
     & Fit & Time  & Fit & Time  & Fit & Time & Fit & Time  & Fit & Time  & Fit & Time\\ 
    \cline{1-13} 
    GR-TK-TR      & 57.33          & 15.40         & 65.25          & 19.79         & 69.20          & 25.14         & 64.94          & 23.62         & 71.03          & 29.28         & 73.63          & 30.51         \\
    RR-TK-TR      & 64.63          & 12.90         & 72.60          & 16.15         & 76.04          & 20.14         & 71.78          & 19.78         & 76.77          & 24.21         & 78.99          & 24.88         \\
    \textbf{Ours} & \textbf{64.68} & \textbf{4.27} & \textbf{72.69} & \textbf{5.51} & \textbf{76.25} & \textbf{7.23} & \textbf{71.80} & \textbf{7.11} & \textbf{76.91} & \textbf{8.96} & \textbf{79.00} & \textbf{9.38} \\
\hline 
    TR-ALS        & 64.85          & 111.69        & 73.80          & 277.18        & 77.78          & 474.53        & 71.91          & 156.68        & 77.46          & 390.90        & 80.00          & 802.87          \\
\hline 
  \end{tabular}
\end{table*}

\begin{figure}[htb!]
  \centerline{
  \subfigure[COIL-100]{
  \includegraphics[width=0.25\textwidth,height=0.2\textwidth]{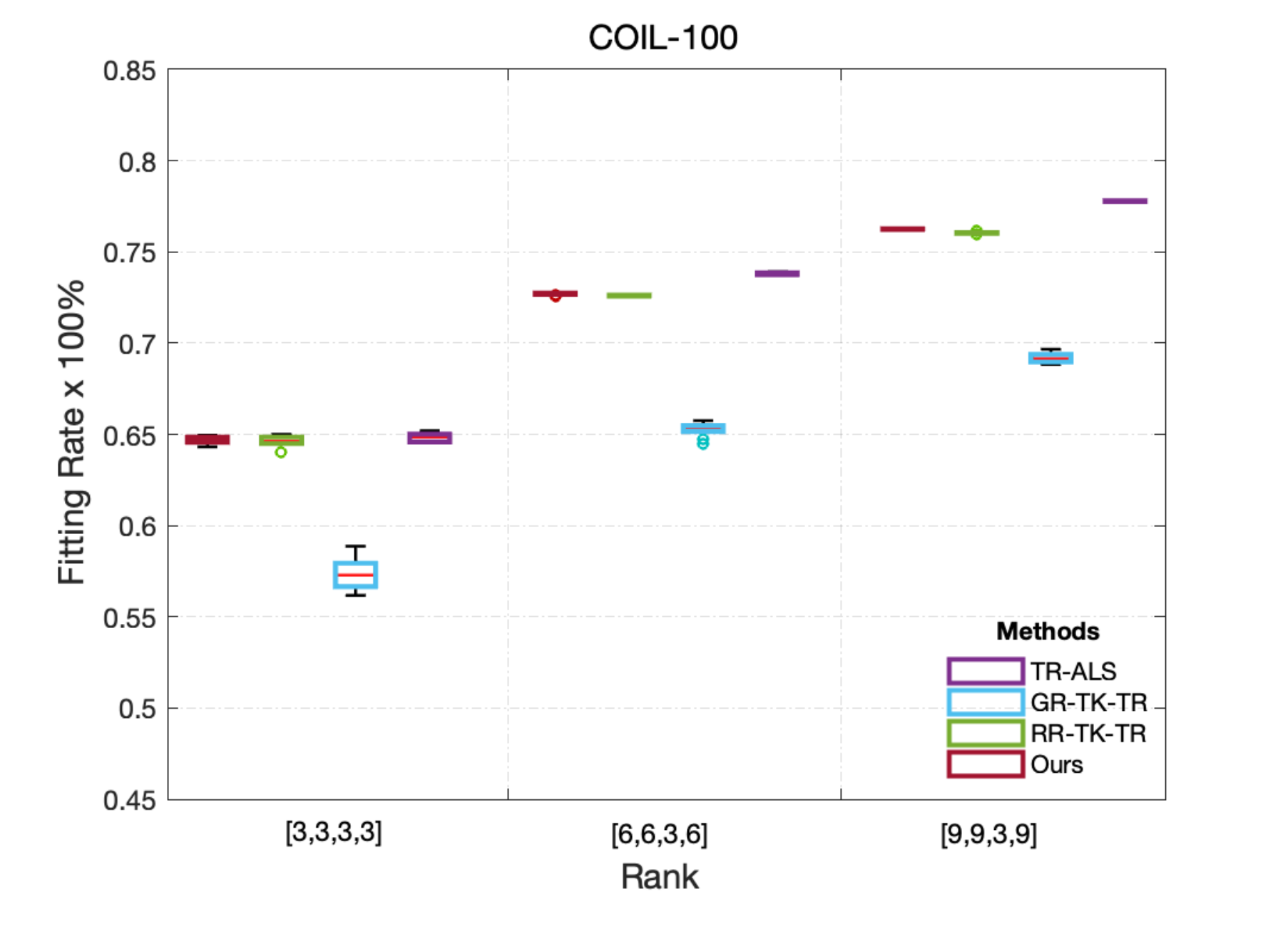}}\hfil
  \subfigure[CIFAR-10]{
  \includegraphics[width=0.25\textwidth,height=0.2\textwidth]{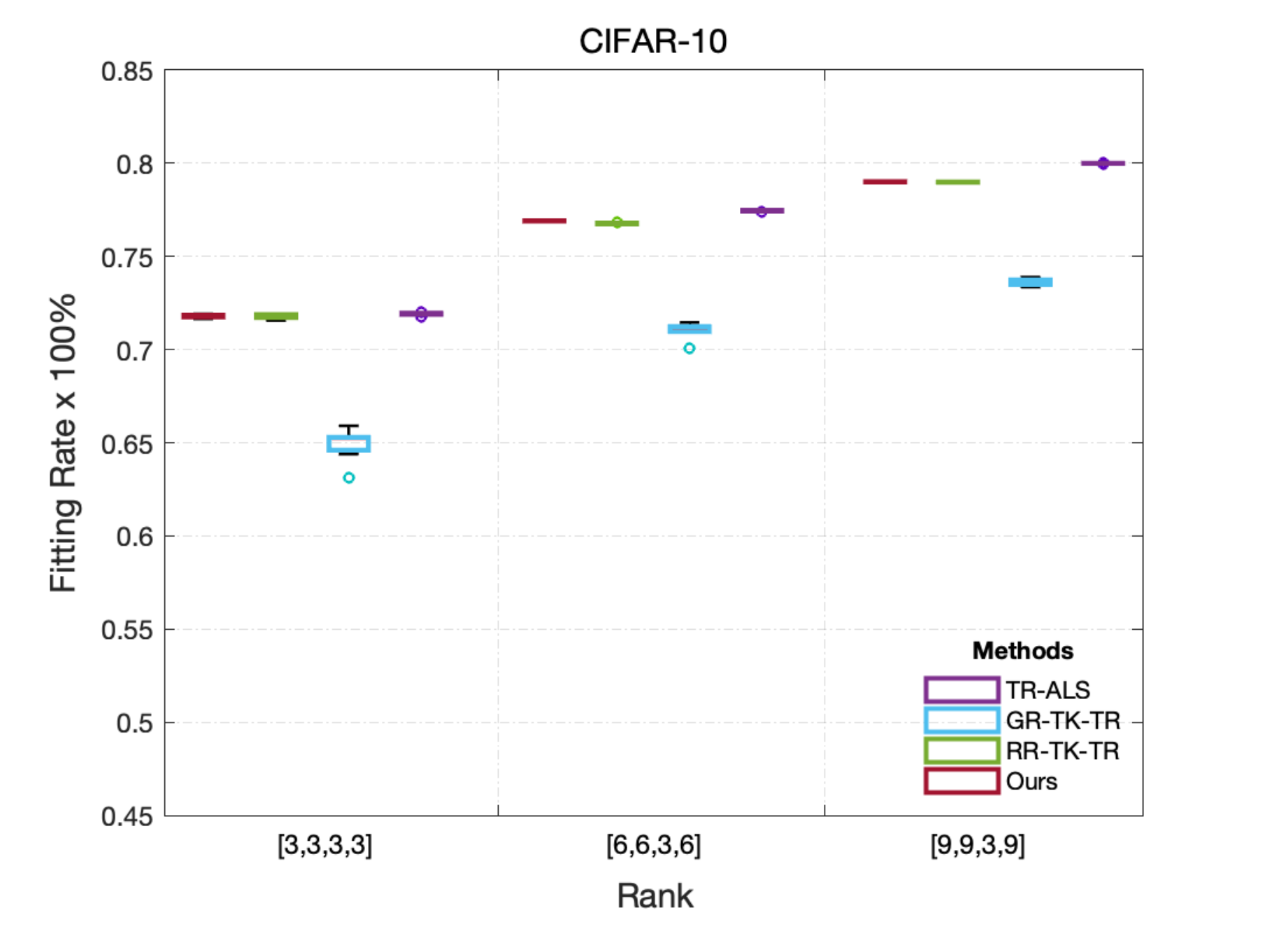}}
  }
  \caption{ Comparison in terms of Fit ($\times 100\%$) of the compared methods on (a) COIL-100 dataset and (b) CIFAR-10 dataset. The TR-rank is set ranged from R=[3,3,3,3], R=[6,6,3,6] to R=[9,9,3,9]. }
  \label{fig:TR-Fit}
\end{figure}

\begin{figure}[htb!]
  \centerline{
  \subfigure[COIL-100]{
  \includegraphics[width=0.25\textwidth,height=0.2\textwidth]{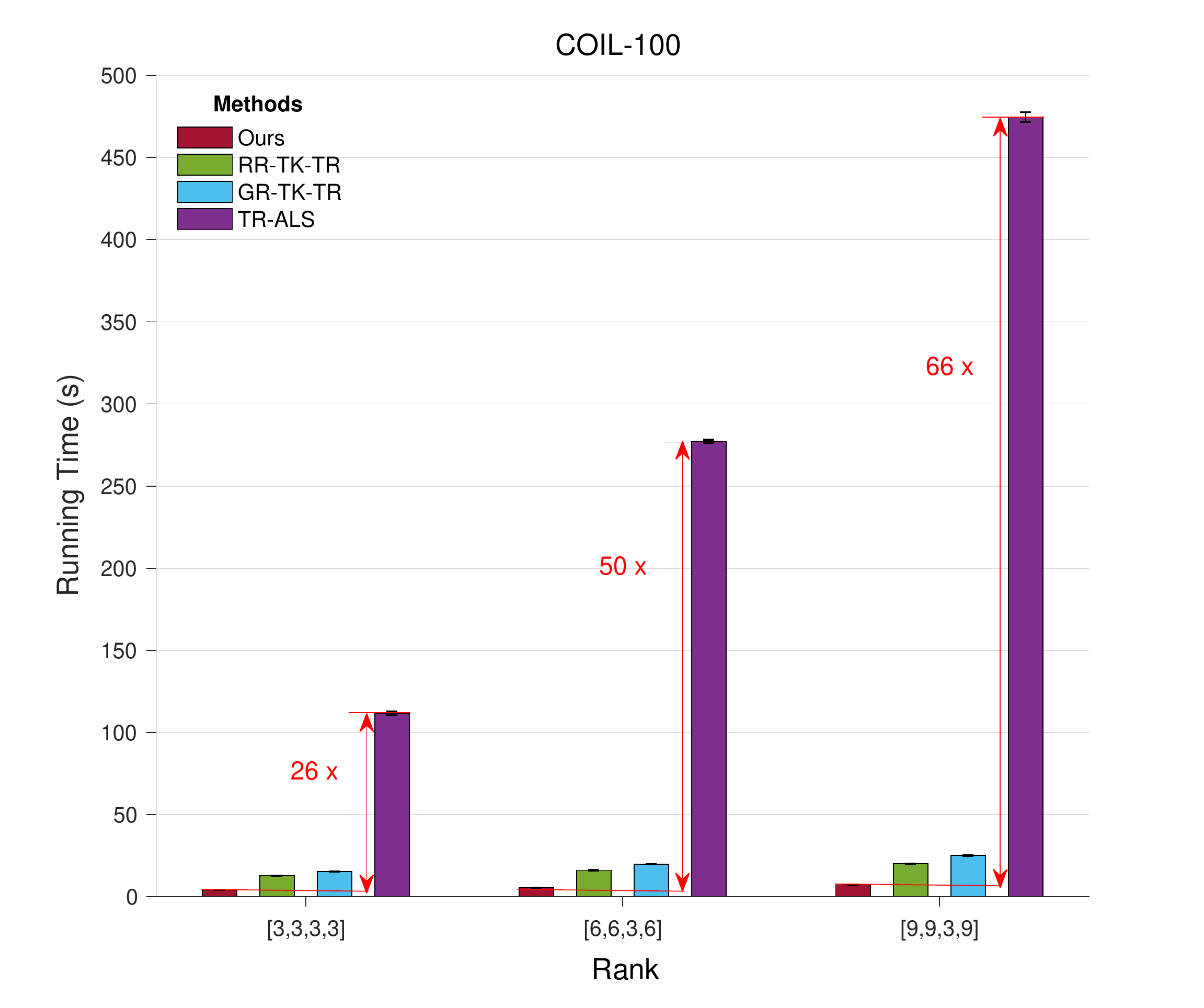}}\hfil
  \subfigure[CIFAR-10]{
  \includegraphics[width=0.25\textwidth,height=0.2\textwidth]{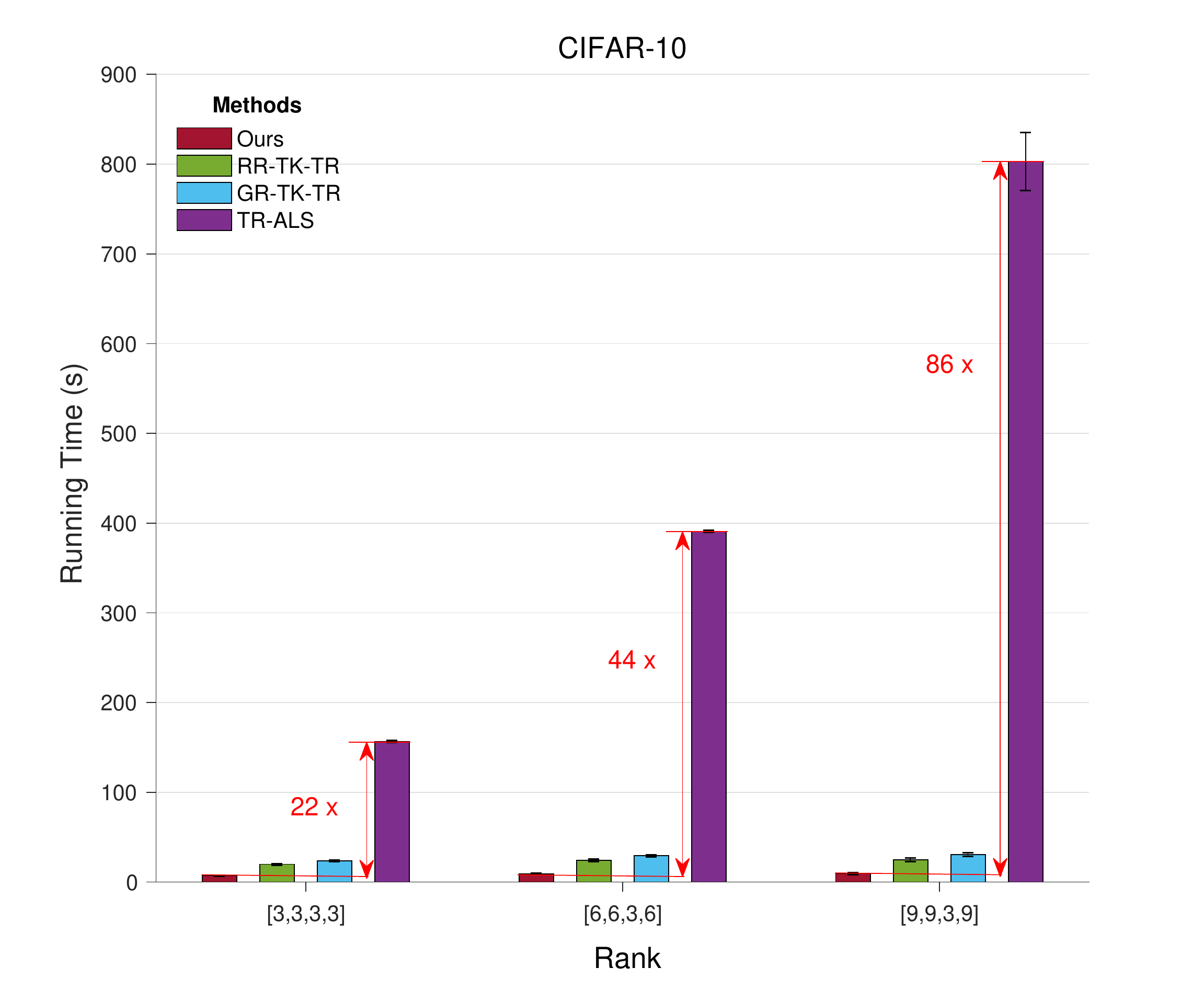}}
  }
  \caption{Time consumption comparison of the compared methods on (a) COIL-100 dataset and (b) CIFAR-10 dataset. The TR-rank is set ranged from R=[3,3,3,3], R=[6,6,3,6] to R=[9,9,3,9]. }
  \label{fig:TR-Time}
\end{figure}

As shown (in Table \ref{tab:realdata}), our method achieved the best recovery accuracy (in terms of Fit) comparable to the TR-ALS while more time efficient than other compared methods.  
In all cases, the fitting rate of the GR-TK-TR was lower than the other methods and slightly unstable, while iterative methods, i.e., RR-TK-TR and ours performed stably and better than the non-iterative GR-TK-TR (see Fig. \ref{fig:TR-Fit}).
Besides, the consumed time by the TR-ALS was extremely larger than the other randomized methods. Our method was the most time efficient, and as the rank increases the efficiency became more prominent, even 86 times faster than TR-ALS (see Fig. \ref{fig:TR-Time}).

\section{Conclusion}
\label{sec:conclusion}

In this work, an rBKI-based Tucker decomposition (rBKI-TK) is proposed for efficient denoising. 
The core idea is to obtain a more accurate projection subspace by rBKI, thus facilitating a higher accuracy approximation.
Theoretical analysis presents that the proposed rBKI-TK can achieve quasi-optimal approximation as the deterministic HOSVD in a more time-efficient fashion.
Besides, the rBKI-TK is also extended into a hierarchical TRD (rBKI-TK-TR) for efficient compression of large-scale data.
Experimental results promisingly validate that, on the one hand, the proposed rBKI-TK outperforms existing methods with excellent denoising performance in both synthetic and real-world video data (even with severe noise interference); on the other hand, the rBKI-TK-TR is more time efficient than existing methods, especially 86 times faster than the classical TR-ALS.

With developments of randomized sketching techniques, randomized tensor decomposition can be extended to other randomized methods and other linear algebra paradigms. 
Therefore, innovations and new insights in randomized tensor decomposition and its applications will be investigated in future work.

\section*{Acknowledgments}
This work was supported in part by the National Natural Science Foundation of China under Grant 62073087 and Grant U1911401.

\bibliographystyle{IEEEtran}

\bibliography{rBKI_TK_ycq.bib}

\end{document}